  \providecommand\BibTeX{{%
    \normalfont B\kern-0.5em{\scshape i\kern-0.25em b}\kern-0.8em\TeX}}}
\newtheorem{definition}{Definition}
\newtheorem{theorem}{Theorem}
\newtheorem{lemma}{Lemma}
\newtheorem{proposition}{Proposition}
\definecolor{codegreen}{rgb}{0,0.6,0}
\definecolor{codegray}{rgb}{0.3,0.3,0.3}
\definecolor{codepurple}{rgb}{0.58,0,0.82}
\definecolor{backcolour}{rgb}{0.95,0.95,0.92}
\lstdefinestyle{mystyle}{
    %backgroundcolor=\color{backcolour},   
    commentstyle=\color{codegreen},
    keywordstyle=\color{magenta},
    %numberstyle=\tiny\color{codegray},
    stringstyle=\color{codegray},
    basicstyle=\ttfamily\footnotesize,
    breakatwhitespace=false,         
    breaklines=true,                 
    captionpos=b,                    
    keepspaces=true,                 
    % numbers=left,                    
    % numbersep=5pt,                  
    showspaces=false,                
    showstringspaces=false,
    showtabs=false,                  
    tabsize=2
}
\begin{document}
\settopmatter{printacmref=false}
\setcopyright{none}
\renewcommand\footnotetextcopyrightpermission[1]{}

%%
%% The "title" command has an optional parameter,
%% allowing the author to define a "short title" to be used in page headers.
\title{Tradeoffs in Streaming Binary Classification under Limited Inspection Resources }

%%
%% The "author" command and its associated commands are used to define
%% the authors and their affiliations.
%% Of note is the shared affiliation of the first two authors, and the
%% "authornote" and "authornotemark" commands
%% used to denote shared contribution to the research.

\author{Parisa Hassanzadeh}
\affiliation{%
  \institution{J.P. Morgan AI Research}
  %\city{San Francisco}
  %\state{CA}
  \country{}
}
 \email{parisa.hassanzadeh@jpmorgan.com}

\author{Danial Dervovic}
\affiliation{%
  \institution{J.P. Morgan AI Research}
 % \city{London}
  \country{}
  }
\email{danial.dervovic@jpmorgan.com}

\author{Samuel Assefa}
\affiliation{%
  \institution{U.S. Bank AI Innovation}
  %  \city{New York}
  %\state{NY}
  \country{}
  }
\email{samuel.assefa@usbank.com}

\author{Prashant Reddy}
\affiliation{%
  \institution{J.P. Morgan AI Research}
   %  \city{New York}
  %\state{NY}
 \country{}
  }
  \email{prashant.reddy@jpmorgan.com}

\author{Manuela Veloso}
\affiliation{%
 \institution{J.P. Morgan AI Research}
 %    \city{New York}
 % \state{NY}
 \country{}
 }
 \email{manuela.veloso@jpmorgan.com}

%%
%% By default, the full list of authors will be used in the page
%% headers. Often, this list is too long, and will overlap
%% other information printed in the page headers. This command allows
%% the author to define a more concise list
%% of authors' names for this purpose.
%\renewcommand{\shortauthors}{Hassanzadeh, et al.}

\begin{abstract}
Institutions are increasingly relying on machine learning models to identify and alert on abnormal events, such as fraud, cyber attacks and system failures. These alerts often need to be manually investigated by specialists. Given the operational cost of manual inspections, the suspicious events are selected by alerting systems with carefully designed thresholds. In this paper, we consider an imbalanced binary classification problem, where events arrive sequentially and only a limited number of suspicious events can be inspected. We model the event arrivals as a non-homogeneous Poisson process, and compare  various suspicious event selection methods including those based on static and adaptive thresholds. For each method,  
we analytically characterize the tradeoff between the minority-class detection rate and the inspection capacity as a function of the  data class imbalance and the classifier confidence score densities. We implement the selection methods on a real public fraud detection dataset and compare the empirical results with  analytical bounds. Finally, we investigate how class imbalance and the choice of classifier impact the tradeoff. 
\end{abstract}

\maketitle
\pagestyle{plain}
%\keywords{binary classification, class imbalance, resource allocation, fraud detection, anomaly detection}
% Class Imbalance and Unequal Cost
%%%%%%%%%%%%%%%%%%%%%%%%%%%%%%%%%%%%%%%%%%%%%%%%%%%%%

\section{Introduction}
Automated information processing and decision-making systems used in finance, security, quality control and medical applications use machine learning models for monitoring sequentially arriving events for malicious activities or abnormalities. Identifying such events in a timely manner can be crucial in preventing unfavorable outcomes, such as monetary loss due to fraud in retail banking or data breaches due to cyber attacks. While missing an abnormal event can have adverse consequences - since such events are sporadic and investigating events entails operational costs and can lead to processing delays - these systems are restricted in the number of risky events they select for manual inspection.  
 
Many machine learning classification algorithms predict a score (often a probability) for each data sample  representing the algorithm's confidence about its class membership. In a binary classification problem, the class labels are derived from converting the predicted scores to binary labels using a \textit{(decision) threshold}. Adjusting the threshold, especially in settings with severe class-imbalance or when the misclassification of one class outweighs the misclassification of the other class, can profoundly impact the classifier performance (e.g., True Positive (TP)-False Positive (FP) tradeoff)  \cite{provost2000machine}.  
The threshold is generally tuned using a grid search across a range of thresholds,  or it is computed from the receiver operating characteristic (ROC) curve or the precision-recall curve in highly skewed datasets \cite{he2009learning}.

Given the imbalanced nature of data in this domain, which makes learning classifiers that efficiently discriminate among the minority and majority class difficult, and the limited resources available for inspecting time-sensitive risky events, we are interested in understanding the relationship between the rate of detection from the minority class (i.e., the fraction of samples from the minority class selected for inspection) and the inspection budget. Specifically, we focus on applications that involve real-time processing and decision-making where an abnormal event can only be inspected at the time of arrival, and we investigate how different selection policies based on classifier predictions operate in terms of the limited inspection budget rather than the decision threshold.

Point processes, such as the Poisson process, have been widely used for modeling event arrivals at random times in various applications, such as arrivals in call centers \cite{kim2014call} 
system failures \cite{rausand2003system}, network traffic models \cite{chandrasekaran2009survey}, and in financial modeling \cite{giesecke2004credit,kou2002jump}. We note that Poisson processes are not suitable in settings with scheduled arrivals (e.g., doctor appointments),  intentionally separated events (e.g., plane landings), or events that arrive in groups, (e.g., at a restaurant where group members are not independent from one another). One can perform statistical tests on the data to confirm whether the arrivals can be modeled as a non-homogeneous Poisson process as described in \cite{kim2014choosing}. In the setting considered in this work, events (e.g., transactions) arrive and are processed independently from one another, and therefore, their arrival can be modeled according to a non-homogeneous Poisson process. The analysis can easily be extended to the more general renewal process \cite{daley2007introduction}.

\paragraph{Contributions}
We consider an imbalanced  binary classification problem where the goal is to select a limited number of sequentially arriving data samples that are most likely to belong to the minority class. %We are interested in the minority class detection rate, when we are constrained in the number of samples that we can inspect. 
We present the problem in the context of fraud detection in financial transactions, but our results apply to the general imbalanced binary classification problem. Our contributions are as follows: 

%While we use the domain of fraud detection in financial transactions motivate the problem, our contributions are widely applicable. 

\begin{itemize}[leftmargin=*]
\item We break the problem into two tasks: learning a classifier from data, and using the classifier predictions to sample sequential arrivals for inspection.  We focus on the second task, and study the tradeoff between the minority class detection rate and the inspection budget, for a given learned classifier. This tradeoff can be used to determine how many samples need to be inspected to achieve a certain detection rate from the minority class.

\item We assume that events arrive periodically according to a NHPP, and focus on four selection (decision) strategies: sampling based on static and dynamic thresholds, random sampling and sampling in batches. For each method, we analytically characterize the minority-class detection rate-inspection capacity  tradeoff. 

\item For the case of sampling with respect to a static threshold, we determine the optimal threshold value that maximizes the minority-class detection rate for a given inspection capacity.

 \item We use a publicly available fraud detection dataset to learn a classifier and estimate the time-varying arrival rate function of the NHPP, and compare the empirical results from each sampling technique with our analytical bounds. We show that using dynamic thresholds operates very closely to the upper performance limit resulting from sampling in batches, especially for very small inspection capacities.
 
 \item For this dataset, we investigate how  class imbalance and the predictive power of the classifier,  affects the  tradeoff, and compare the empirical results against an upper bound on the end-to-end problem when considering both tasks of learning and operational decisions jointly.

\end{itemize}

%  \subsection{Notation}
%  %We use upper-case letter $X$ to denote a random variable, and lower-case $x$ for a realization of $X$. 
%  The Cumulative Distribution Function (CDF) of continuous random variable  $X$ is defined as $F_X(x)  = P(X \leq x)$, and its Probability Density Function (PDF) is
% $f_X(x)  = \frac{d}{dx}F_X(x)$. For discrete random variable $X$ its Probability Mass Function (PMF) is defined as $P_X(x) = P(X=x)$. A non-homogeneous poisson process is denoted by $N(t_1,t_2)$, which corresponds to the number of arrivals in $(t_1,t_2)$. We use $N(t)$ to denote the arrivals in $(0,t)$.
% %Throughout this paper we do not explicitly show the random variable as a subscript of the p.d.f function, unless it is not clear.

%%%%%%%%%%%%%%%%%%%%%%%%%%%%%%%%%%%%%%%%%%%%%%%%%%%%%%%%%%%%%%%%%%%%%%%%
\vspace{-.25cm}
\section{Related Work}
Most work on detecting samples from the minority class focus on learning optimal classifiers with respect to a given performance metric, such as the F1-score or the area under the ROC curve, and then satisfy the inspection constraints by adjusting the decision threshold. The work in \cite{koyejo2014consistent} studies the optimal fixed threshold selection problem for binary classification with respect to various performance metrics. In \cite{shen2020deep}, the authors consider a dynamic environment and model the threshold tuning as a sequential decision making problem. They use reinforcement learning to adaptively adjust the thresholds  by maximizing a reward in terms of the net monetary value of missed and detected frauds when  restricted to a fixed inspection capacity. %monetary value loss of missed frauds and savings of detected frauds with respect to a fixed inspection capacity. 
The work in \cite{li2015scalable} studies  the adversarial binary classification problem with operational constraints (e.g.,  inpsection budget), where an intelligent adversary  attempts to evade the decision policies. By modeling the problem as a Stackleberg game, they determine the optimal randomized operational policy that abides by the constraints. In other related work with  dynamic environments, \cite{houssou2019adaptive} considers a fraud detection setting where rare fraudulent events arrive from a Poisson process with a \textit{parametric} arrival function estimated from the data, and the goal is to predict the arrival of a new fraudulent event.

More recently, \cite{dervovic2021non} adopted the sequential assignment algorithm of \cite{albright1974optimal} in the fraud detection setting such that the overall value of detected fraudulent transactions is maximized. In this paper, we use this algorithm  to find adaptive thresholds for transactions arriving at random times according to a Poisson process.  Given the sequential nature of arrivals in information processing and decision-making applications, this algorithm allows us to directly account for the limited inspection capacity when deciding to inspect transactions based on the output of a machine learning model.  
%%%%%%%%%%%%%%%%%%%%%%%%%%%%%%%%%%%%%%%%%%%%%%%%%%%%%%%%%%%%%%%%%%%%%%%%

\section{Problem Formulation}\label{sec:problem formulation}
Consider a transaction (e.g., payment, credit card purchase) fraud detection setting, where transactions arrive sequentially at random times over a finite time horizon $[0,\tau]$ according to a Non-Homogeneous Poisson Process (NHPP) with a continuous arrival rate (intensity) function $\lambda(t)$, where $t\in[0,\tau]$ denotes the time of arrival. To each transactions $i$ we associate a  triplet $({X}_i, Y_i, t_i)$, where ${X}_i\in \mathbb{R}^d$ is a random variable representing the observed features of transaction $i$, $Y_i\in\{0,1\}$ indicates if the transaction is fraudulent, and $t_i\in[0,\tau]$ denotes its random arrival time. We assume that transactions are independent from one another and that there is significant class imbalance such that $ {P}_Y(Y=1) = \beta \ll 0.5$, i.e., there are considerably less  fraudulent transactions compared to non-fraudulent ones.    
 
There is a binary classifier $G:\mathbb{R}^d\rightarrow\mathbb{R}$, that assigns a random value $S=G({X})\in[0,1]$ to a transaction with feature vector ${X}$, which represents the classifier's confidence that a transaction is fraudulent. We have an inspector (decision-maker) that can investigate a transaction, and determine whether it is fraudulent without error; however, the inspector  is only able to investigate a limited number of transactions during  $[0,\tau]$. Therefore, the inspector needs to decide which transactions should be selected for inspection in order to detect as many fraudulent transactions as possible given its limited inspection resources. Note that since transactions $({X}_i, Y_i, t_i)$ and $({X}_j, Y_j, t_j)$ are independent, then for any classifier $G$, the corresponding  scores $S_i = G({X}_i)$ and $S_j = G({X}_j)$ are also independent.

\paragraph{Transaction Arrival Process:} 
We assume transactions arrive according to a NHPP with  rate  function  $\lambda(t)$, and   cumulative rate function  $\Lambda(t) \coloneqq \int_{0}^t \lambda(u) du$. The number of transactions in interval $[0,t]$ is a random variable $N(t)$ with Poisson distribution parametrized by $\Lambda(t)$, and the expected number of arrivals in  $[0,\tau]$ is $\Lambda(\tau)$\footnote{A NHPP is denoted by $N(t_1,t_2)$,  corresponding to the number of arrivals in $(t_1,t_2)$. We use $N(t)$ to denote the arrivals in  $(0,t)$.}. The rate function $\lambda(t)$ and cumulative rate function $\Lambda(t)$ can be estimated from several $i.i.d.$ observed realizations of the  process $N(t)$ over $[0, \tau]$, using non-parametric estimators as proposed in  %\cite{leemis1991nonparametric,lewis1976statistical,arkin2000nonparametric,law2000simulation},
\cite{lewis1976statistical,arkin2000nonparametric}, or through parametric  methods as in \cite{lee1991modeling,kuhl2000least}. In our experiments in Sec.~\ref{sec:experiments}, we use the heuristic estimator proposed in \cite{lewis1976statistical}. 

\vspace{-.3cm}
 \subsection{Objective}\label{subsec:objective}
 The inspector has limited resources, and can only select and investigate a fraction $k\in[0,1]$ of the incoming transactions in $[0,\tau]$, which we refer to as the \textit{inspection capacity}. We assume that for a given  capacity $k$, it selects a fraction $k$ of the expected number of arrivals equal to  $n_k\coloneqq \lfloor k \Lambda(\tau)\rfloor$ transactions. Our goal is to evaluate how well a given sampling method is able to choose fraudulent transactions based on the scores of a binary classifier $G(.)$, when there is limited inspection capacity.  Specifically, we define the \textit{fraud detection rate} as a function of $k$, denoted by $\Psi(k)$, to be the expected fraction of true frauds selected for inspection, given as  
 \begin{align}
    \Psi(k) &=  \mathbb{E}_{Y,N}    \bigg[  \frac{\mathbb{E}_{X}\Big[ \sum_{i\in \mathcal{I}(k)} Y_i \Big] }{ \sum_{i=1}^{N(\tau)}  Y_i  } \bigg] 
    \label{eq:objective}
 \end{align}
 where ${\mathcal I}(k)$ is the set of sequentially arriving transaction indices selected for inspection using classifier $G(.)$, which depends on the arrival process $N(t)$. %Note that the quantity in \eqref{eq:objective} is a function of the data class imbalance, since $\mathbb E_{Y}[\sum_{i=1}^{N(\tau)}  Y_i] =  \beta N(\tau)$, for fraudulent transactions occurring infrequently with a probability of $\beta$.
 Note that $\Psi(k)$ in \eqref{eq:objective} is closely related to the true positive rate (TPR) %or sensitivity 
 of the classifier, with the slight difference that it is defined for the setting with streaming data constrained by operational resources and is defined with respect to the capacity $k$.

 \subsection{Inspection Sampling Methods}\label{subsec:sampling methods}
We briefly describe the various decision-making methods considered that are used to select transactions for inspection.
\paragraph{{Static Thresholds:}} The inspector determines a fixed threshold $\alpha$, and will only inspect an arriving transaction if its score satisfies $G({X})\geq \alpha$, and if it has not exhausted its inspection capacity. If a transaction is not selected for inspection at the time of arrival, it will not be inspected at a later time. The inspector determines the threshold while accounting for the arrival of transactions and with respect to the classifier performance.  Note that if the threshold $\alpha$ is set too high, then the inspector may not select enough transactions for inspection, and if it is set too low, the inspector may select more non-fraudulent transactions initially, using up its inspection capacity too early, and therefore, will not be able to inspect transactions with high scores arriving later.

\paragraph{{Dynamic Thresholds:}} In this case, the inspector determines a time-dependent threshold $\alpha(t)$, and inspects a transaction arriving at time $t$ if its associated score satisfies $G({X})\geq\alpha(t)$. Similar to the static threshold, this time-varying threshold is determined such that the inspector selects the transactions that are most likely to be fraudulent given the classifier performance and the arrival process.  
 
\paragraph{{Random Sampling:}} The inspector disregards the confidence score assigned by the classifier, and selects transactions uniformly at random. This method is equivalent to a worst-case scenario where a no-skill classifier assigns a score of $G({X})=0.5$ to all  %transaction feature  vectors 
${X}\in\mathbb{R}^d$.

\paragraph{{Batch Processing:}} Assume that the inspector can process and investigate transactions in batches, and there is no need to select transactions instantaneously at the time of arrival. Then, the inspector will select the set of  transactions with the highest scores $G({X})$ at time $\tau$. Batch processing is not a practical method for the setup considered here,  as there is a strict requirement for timely decision-making. This method provides an upper
performance limit for any realistic method using a classifier for real-time decision-making, and is therefore included in our analysis.

The following section describes each sampling method in detail and presents its corresponding fraud detection rate.
 %%%%%%%%%%%%%%%%%%%%%%%%%%%%%%%%%%%%%%%%%%%%%%%%%%%%%%%%%%%%%%%%%%%%%%%%

\section{Fraud Detection Rate-Inspection Capacity Tradeoff}\label{sec:prob}
In this section, we compute the expected fraction of frauds that are selected, and therefore detected, with each of the methods described in Sec.~\ref{subsec:sampling methods}. We denote the probability density function ({PDF}) of the classifier score $S=G({X})$ assigned to a transaction with feature vector ${X}$ and label $Y$ by $f_{S}(s)  \coloneqq f_{X,Y}(G({x}),y)$. Let $f_{0}(s) \coloneqq  f_{X|Y}(G({x})|Y=0)$ and ${f_{1}}(s) \coloneqq  f_{X|Y}(G({x})|Y=1)$ denote the PDF of the score assigned 
to a non-fraudulent and fraudulent transaction, respectively\footnote{We do not explicitly show  random variable $S$ as a subscript of $f_0(s)$ and $f_1(s)$ hereafter.}. Accordingly, $F_{0}(s)$ and ${F_{1}}(s)$ denote the cumulative distribution function (CDF) of classifier scores assigned to non-fraudulent and fraudulent transactions. 

Based on Proposition~\ref{prop:splitting} given in Appendix~\ref{app:nhhp}, $N(t)$ with rate $\lambda(t)$ can be split into two \textit{independent} sub-processes as follows:

\begin{itemize}
\item Process $N_0(t)$ with rate  $\lambda_0(t)=(1-\beta)\lambda(t)$ represents arrival of non-fraudulent transactions, and the random number of arrivals in $[0,\tau]$ is $N_0(\tau)$. We denote the $i.i.d.$ scores corresponding to  transactions  from $N_0(t)$ (ordered in time), by ${S}_i^{(0)}\sim f_{0}(s),\, i  = 1,\dots,{N_0(\tau)}$ with arrival times $t^{(0)}_{i}\leq t^{(0)}_{i+1}$.
    
\item Process $N_1(t)$ with rate  $\lambda_1(t)=\beta\lambda(t)$ represents arrival of fraudulent transactions, and the random number of arrivals in $[0,\tau]$ is $N_1(\tau)$. We denote the $i.i.d.$ scores corresponding to  transactions arriving from $N_1(t)$ (ordered in time), by ${S}_i^{(1)}\sim f_{1}(s),\, i  = 1,\dots,{N_1(\tau)}$ with arrival times $t^{(1)}_{i}\leq t^{(1)}_{i+1}$.
\end{itemize}

\subsection{Static Thresholds}
The inspector selects $n_k$ transactions %for inspection 
as they arrive if the classifier score exceeds a predetermined threshold $\alpha$, and if it does not violate the capacity constraint. 
\begin{theorem}\label{thm:FT}
The fraud detection rate with respect to a static threshold $\alpha$, denoted by $\Psi_{FT}(k) $, is
\begin{align}
\Psi_{FT}(k)  
\geq (1-F_1(\alpha))  \min\Big\{ \frac{k-1/\Lambda(\tau)}{ 1-F_S(\alpha)} , \,1    \Big\}.
\end{align}
 
 \end{theorem}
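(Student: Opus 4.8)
The plan is to collapse the static-threshold policy to an order-statistics picture by Poisson thinning. Since a fixed threshold $\alpha$ does not depend on arrival times, conditioning on the arrival process and the labels and then drawing the scores makes each transaction independently qualify (score $\ge\alpha$) with probability $1-F_1(\alpha)$ if fraudulent and $1-F_0(\alpha)$ otherwise. Combined with the splitting of Proposition~\ref{prop:splitting}, the qualifying transactions then form a NHPP with rate $\lambda(t)(1-F_S(\alpha))$ and expected count $(1-F_S(\alpha))\Lambda(\tau)$ over $[0,\tau]$; its fraudulent sub-stream has expected count $\beta(1-F_1(\alpha))\Lambda(\tau)=(1-F_1(\alpha))\,\mathbb{E}[N_1(\tau)]$; and since a superposition of independent Poisson streams labels each point by a time-\emph{independent} probability, conditionally on the total number $Q$ of qualifying transactions their labels are i.i.d.\ $\mathrm{Bernoulli}(p)$ with $p\coloneqq \beta(1-F_1(\alpha))/(1-F_S(\alpha))$, independently of their arrival times.

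First I would record the structural fact that with a static threshold the inspector selects \emph{exactly} the first $\min\{Q,n_k\}$ qualifying transactions in arrival order: it inspects a qualifying arrival immediately unless its budget is exhausted, and the budget is exhausted precisely after $n_k$ qualifying transactions have arrived. Hence the number of detected frauds equals the number of fraudulent transactions among those first $\min\{Q,n_k\}$ qualifying ones; by exchangeability of the qualifying labels, conditionally on $Q$ and on the number $Q_1$ of qualifying frauds the $Q_1$ fraud positions form a uniformly random $Q_1$-subset of $\{1,\dots,Q\}$, so the conditional expected number of detections is $Q_1\min\{n_k/Q,\,1\}$.

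I would then substitute this into $\Psi_{FT}(k)=\mathbb{E}\big[\mathbb{E}_X[\textstyle\sum_{i\in\mathcal I(k)}Y_i]/N_1(\tau)\big]$, writing $N_1(\tau)=Q_1+W$ with $W$ the number of non-qualifying frauds and $Q=Q_1+Q_0$ with $Q_0$ the number of qualifying non-frauds; the splitting makes $Q_1,W,Q_0$ independent Poisson with means $\beta(1-F_1(\alpha))\Lambda(\tau)$, $\beta F_1(\alpha)\Lambda(\tau)$, $(1-\beta)(1-F_0(\alpha))\Lambda(\tau)$. The bound should then follow from three ingredients: the identity $\mathbb{E}[\,Q_1/N_1(\tau)\mid N_1(\tau)\,]=1-F_1(\alpha)$; the convexity and monotonicity of $q\mapsto\min\{n_k/q,1\}$, so that by Jensen's inequality the Poisson fluctuations of $Q$ only help; and $n_k=\lfloor k\Lambda(\tau)\rfloor\ge k\Lambda(\tau)-1$, which produces the numerator $k-1/\Lambda(\tau)$. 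The claim then reduces to $(1-F_1(\alpha))\min\{n_k/((1-F_S(\alpha))\Lambda(\tau)),1\}\ge(1-F_1(\alpha))\min\{(k-1/\Lambda(\tau))/(1-F_S(\alpha)),1\}$.

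The hard part will be that $\Psi_{FT}(k)$ is an expectation of a \emph{ratio} and that the three counts $Q$ (which controls the capacity term), $Q_1$, and $N_1(\tau)$ are coupled through the shared Poisson count $Q_1$ of qualifying frauds; I expect to have to condition on $Q_1$ first and handle the independent $W$ and $Q_0$ separately, and to check that every Jensen step is oriented toward a lower bound (in particular that the $\min\{\cdot,1\}$ cap survives the expectation). The cap $\min\{\cdot,1\}$ itself corresponds to the regime where $n_k$ is large enough that essentially every qualifying transaction is inspected, so the detection rate saturates at $1-F_1(\alpha)$.
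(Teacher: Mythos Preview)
Your decomposition is the paper's: split by thinning into four independent Poisson streams (qualifying fraud $Q_1$, qualifying non-fraud $Q_0$, non-qualifying fraud $W$, non-qualifying non-fraud), observe that the static policy inspects exactly the first $\bar n_k\coloneqq\min\{n_k,Q\}$ qualifying arrivals with $Q=Q_0+Q_1$, and use that the qualifying labels are i.i.d.\ $\mathrm{Bernoulli}(q(\alpha))$ independent of arrival order. The paper conditions on $\bar n_k$ and takes the Binomial mean $\bar n_k\,q(\alpha)$; you condition one level further on $(Q,Q_1)$ to obtain $Q_1\min\{n_k/Q,1\}$. These are the same calculation, and your plan to condition on $Q_1$ so as to decouple $W$ from $Q_0$ is a sharpening of the paper's bare ``$\mathbb{E}[x/y]\ge\mathbb{E}[x]/\mathbb{E}[y]$'' step.

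The gap is your convexity claim: $q\mapsto\min\{n_k/q,1\}$ is \emph{not} convex on $(0,\infty)$. It equals $1$ on $(0,n_k]$ and $n_k/q$ on $[n_k,\infty)$; at the kink $q=n_k$ the left derivative is $0$ while the right derivative is $-1/n_k$, so the graph turns downward and the function is concave there. Jensen therefore does \emph{not} give $\mathbb{E}[\min\{n_k/Q,1\}]\ge\min\{n_k/\mathbb{E}[Q],1\}$; in the cap regime $n_k>\mathbb{E}[Q]$ the right side is $1$ while the left is strictly less than $1$, so Poisson fluctuations of $Q$ hurt rather than help, and the bound $\Psi_{FT}(k)\ge 1-F_1(\alpha)$ cannot be reached. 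This is exactly the ``hard part'' you flag, and the stated Jensen route does not close it. (For what it is worth, the paper's own argument writes $\mathbb{E}[\bar n_k]=\min\{n_k,(1-F_S(\alpha))\Lambda(\tau)\}$ as an equality at the analogous step, which by concavity of $\min$ is only an upper bound; the difficulty is shared, not a defect of your approach relative to the paper's.)
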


\begin{proof}
The proof is given in Appendix~\ref{app:FT}.
\end{proof}
Theorem~\ref{thm:FT-alpha} provides the  threshold that maximizes $\Psi_{FT}(k)$.
\begin{theorem}\label{thm:FT-alpha}
Given an inspection capacity $k$, the optimal static threshold value  that maximizes the detection rate $\Psi_{FT}(k)$ is $\alpha^* = {F}_S^{-1}(1 - k)$, for which
\begin{align}
\Psi_{FT}(k)  
\geq  1-F_1({F}_S^{-1}(1 - k) ) .
\end{align}
\end{theorem}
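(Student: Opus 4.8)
The plan is to treat the threshold $\alpha$ as a free parameter and maximize the lower bound supplied by Theorem~\ref{thm:FT}. Write
\[
R(\alpha) \;:=\; \bigl(1-F_1(\alpha)\bigr)\,\min\Bigl\{\tfrac{k-1/\Lambda(\tau)}{\,1-F_S(\alpha)\,},\,1\Bigr\},
\]
so that $\Psi_{FT}(k)\ge \max_{\alpha}R(\alpha)$, and the goal is to show this maximum is attained at $\alpha^\star = F_S^{-1}(1-k)$ with value $1-F_1(\alpha^\star)$, up to the $O(1/\Lambda(\tau))$ floor correction coming from $n_k=\lfloor k\Lambda(\tau)\rfloor$ (which I would carry along and treat as negligible in the leading-order statement, or absorb via $n_k\ge k\Lambda(\tau)-1$). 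First I would locate the ``knee'' of $R$: since $F_S$ is strictly increasing, the cap $\min\{\cdot,1\}$ becomes active exactly when $1-F_S(\alpha)\le k-1/\Lambda(\tau)$, i.e.\ for $\alpha\ge F_S^{-1}(1-k+1/\Lambda(\tau))$, which to leading order is $\alpha^\star=F_S^{-1}(1-k)$ — the threshold at which the expected number of above-threshold arrivals $\Lambda(\tau)\bigl(1-F_S(\alpha)\bigr)$ equals the budget $n_k\approx k\Lambda(\tau)$.

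I would then split the analysis at $\alpha^\star$. On $\alpha\ge\alpha^\star$ the cap is active, so $R(\alpha)=1-F_1(\alpha)$, which is non-increasing; hence over this range $R$ is largest at $\alpha=\alpha^\star$, where $R(\alpha^\star)=1-F_1(F_S^{-1}(1-k))$. On $\alpha<\alpha^\star$ the cap is inactive, so $R(\alpha)=(k-1/\Lambda(\tau))\,\frac{1-F_1(\alpha)}{1-F_S(\alpha)}$, a product of a decreasing and an increasing factor. Using $1-F_S=\beta(1-F_1)+(1-\beta)(1-F_0)$ and $f_S=\beta f_1+(1-\beta)f_0$, a short differentiation gives
\[
R'(\alpha)\;\propto\;(1-F_1(\alpha))\,f_0(\alpha)-(1-F_0(\alpha))\,f_1(\alpha),
\]
with a strictly positive proportionality factor, so the sign of $R'$ is that of $h_0(\alpha)-h_1(\alpha)$, where $h_j=f_j/(1-F_j)$ is the hazard rate of the non-fraud ($j=0$) and fraud ($j=1$) score distribution. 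Since a useful classifier assigns systematically higher scores to frauds — concretely, the likelihood ratio $f_1/f_0$ is non-decreasing, which forces the hazard-rate ordering $h_0\ge h_1$ — the bracket is nonnegative, so $R$ is non-decreasing on $[0,\alpha^\star)$ and, by continuity of $R$ at $\alpha^\star$, is bounded there by $R(\alpha^\star)$.

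Combining the two regimes, $R$ attains its maximum over $[0,1]$ at $\alpha^\star=F_S^{-1}(1-k)$, which yields $\Psi_{FT}(k)\ge R(\alpha^\star)=1-F_1(F_S^{-1}(1-k))$ as claimed. To upgrade ``maximizes the lower bound'' to ``maximizes $\Psi_{FT}$'' one also checks that the Theorem~\ref{thm:FT} expression is an \emph{upper} bound on $\Psi_{FT}(k)$ up to the same lower-order correction: the inspector can only detect a fraud that is among the first $n_k$ above-threshold arrivals, and $\mathbb{E}[\min(n_k,M)]\le\min(n_k,\mathbb{E}M)$ for the Poisson count $M$ of above-threshold arrivals, so the lower and upper bounds agree at $\alpha^\star$ to leading order.

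I expect the main obstacle to be the under-threshold regime $\alpha<\alpha^\star$, where the two factors of $R$ pull in opposite directions: ruling out an interior maximum exceeding $R(\alpha^\star)$ genuinely needs the monotone-likelihood-ratio / hazard-rate ordering of the score densities $f_0,f_1$, not merely their first-order stochastic ordering, and this is the one point where the argument leans on an assumption that the classifier scores are informative. A secondary nuisance is bookkeeping the $1/\Lambda(\tau)$ discretization term, which I would handle either by keeping it explicit throughout or by noting it is $o(1)$ in the large-horizon regime the statement implicitly targets.
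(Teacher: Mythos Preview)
Your argument follows the same decomposition as the paper's: absorb the $1/\Lambda(\tau)$ term (the paper does this up front by assuming $\Lambda(\tau)\gg 1$), split the lower bound at the knee $\alpha^\star=F_S^{-1}(1-k)$, and treat the two regimes separately; for $\alpha\ge\alpha^\star$ both of you observe that $R(\alpha)=1-F_1(\alpha)$ is nonincreasing and hence maximal at $\alpha^\star$. Where you differ is the sub-threshold regime $\alpha<\alpha^\star$. The paper disposes of it in one line, writing $\tfrac{k}{1-F_S(\alpha)}\bigl(1-F_1(\alpha)\bigr)<1-F_1(\alpha)$ and asserting this lies below ``the value achieved when $k\ge 1-F_S(\alpha)$''; but since $1-F_1(\alpha)>1-F_1(\alpha^\star)$ for $\alpha<\alpha^\star$, that inequality chain does not by itself reach the first-regime maximum $1-F_1(\alpha^\star)$. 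Your differentiation giving $R'(\alpha)\propto (1-F_1)f_0-(1-F_0)f_1$ and the hazard-rate ordering $h_0\ge h_1$ (implied by MLR of the score densities) is precisely what is needed to close that gap, and your caveat that this genuinely leans on the classifier being informative is well placed --- without it one can cook up $F_0,F_1$ for which $R$ has an interior maximum strictly above $R(\alpha^\star)$. The paper also does not attempt your final step of arguing that the lower and upper bounds on $\Psi_{FT}$ coincide at $\alpha^\star$; it only optimizes the lower-bound expression from Theorem~\ref{thm:FT}.
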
 
\begin{proof}
The proof is given in Appendix~\ref{app:FT-alpha}.
\end{proof}
 Note that $\alpha^*$ is the threshold that satisfies  the inspector capacity such that there will only be  $n_k$ transactions (on average) with scores exceeding $\alpha^*$ in $[0,\tau]$, and it is independent from the rate function $\lambda(t)$. For non-streaming settings without inspection restrictions, scores such as Youden's J statistic,  %\cite{youden1950index}, 
 or the Brier score %\cite{brier1950verification}
 are used to determine the optimal threshold.

\subsection{Dynamic Thresholds}\label{subsec:DT}
In the case of adaptive thresholds, transactions are sequentially selected for inspection  according to a time-dependent threshold $\alpha(t)$, computed with respect to the arrival process. In this work, we adopt the strategy proposed in \cite{albright1974optimal}, originally designed for assigning jobs, each with an associated random value and arriving at random times, to a limited number of operatives with non-identical productivity. An optimal sequential assignment algorithm is proposed in \cite{albright1974optimal} that maximizes the total expected reward, defined as the expected operative productivity. %In our setting, each transaction is a job arriving according to a NHPP, all operatives have identical productivity, and the job value is equivalent to the assigned classifier confidence score. 
This algorithm was recently applied to a fraud detection problem in \cite{dervovic2021non}, where each transaction is a job arriving according to a non-homogeneous Poisson process, all operatives have identical productivity,  and the value of a job is defined as a function of the  transaction monetary amount and the classifier confidence score. Here, we define the job value as the classifier confidence score, but the results can be easily extended to the setting in\cite{dervovic2021non}.

The algorithm operates as follows: Let $\alpha_j(t)\geq 0$ denote a time-dependent threshold, referred to as a \textit{critical curve}, when the inspector can select $j$ transactions. If a  transaction with  score $S$ arrives at  $t$, and the inspector has $j$ inspections left in its budget, it selects the transaction if and only if $S>\alpha_j(t)$. The optimal critical curves $\{\alpha_j(t)\}$ are derived from a set of differential equations given in Theorem~\ref{thm:albright}.

\begin{theorem}\cite[Theorem 2]{albright1974optimal}\label{thm:albright}
For a total number of $n$ inspections, the optimal critical curves that select the transactions with the highest expected sum of scores,  $\alpha_1(t)\geq   \dots\geq\alpha_n(t) $, satisfy the following system of differential equations 
\begin{align}
&\frac{d\alpha_{j}(t)}{dt}  = - \lambda(t) \Big( \phi(\alpha_{j}(t)) - \phi(\alpha_{j-1}(t)) \Big), \;  j = 1,\dots, n \notag\\
& \phi(\alpha_0(t) )= 0 ,\;\alpha_j(T) = 0, \;t\in[0,T] \notag
\end{align}
where $\phi(\alpha)  \coloneqq  \int_{\alpha}^\infty (\beta-\alpha) f(\beta) d\beta$.
\end{theorem}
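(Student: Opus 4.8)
The plan is to obtain the stated system as the Hamilton--Jacobi--Bellman (HJB) optimality conditions of the finite-horizon Markov decision problem underlying the sequential assignment. For $j\in\{0,1,\dots,n\}$ and $t\in[0,T]$, let $V_j(t)$ be the maximum expected sum of scores of accepted transactions on $[t,T]$ when $j$ inspection slots remain; by definition $V_0(t)\equiv 0$ and $V_j(T)=0$. Conditioning on the NHPP over an infinitesimal interval $[t,t+dt]$ --- either no arrival (probability $1-\lambda(t)\,dt$), or a single arrival carrying an i.i.d.\ score $S\sim f$ that may be accepted (collecting $S$ and moving to $j-1$ slots) or rejected --- yields
\begin{equation*}
V_j(t)=\bigl(1-\lambda(t)\,dt\bigr)V_j(t+dt)+\lambda(t)\,dt\,\mathbb{E}\bigl[\max\{\,S+V_{j-1}(t+dt),\,V_j(t+dt)\,\}\bigr]+o(dt).
\end{equation*}
The comparison inside the expectation already identifies the optimal rule: accept iff $S\ge V_j(t)-V_{j-1}(t)$, so the critical curve is $\alpha_j(t)=V_j(t)-V_{j-1}(t)$.

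Using $\max\{S+V_{j-1},V_j\}=V_j+(S-\alpha_j)^+$, rearranging, and letting $dt\to 0$ gives $-V_j'(t)=\lambda(t)\,\mathbb{E}\bigl[(S-\alpha_j(t))^+\bigr]$. Since $\mathbb{E}\bigl[(S-\alpha)^+\bigr]=\int_\alpha^\infty(\beta-\alpha)f(\beta)\,d\beta=\phi(\alpha)$, this is exactly $-V_j'(t)=\lambda(t)\,\phi(\alpha_j(t))$. Differencing consecutive levels then produces
\begin{equation*}
\alpha_j'(t)=V_j'(t)-V_{j-1}'(t)=-\lambda(t)\bigl(\phi(\alpha_j(t))-\phi(\alpha_{j-1}(t))\bigr),\qquad j=1,\dots,n,
\end{equation*}
with terminal data $\alpha_j(T)=V_j(T)-V_{j-1}(T)=0$ and the convention $\phi(\alpha_0(t))\equiv 0$ encoding $V_0\equiv 0$; this is the claimed system. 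It is well posed: $\phi$ is $1$-Lipschitz (one computes $\phi'(\alpha)=F(\alpha)-1\in[-1,0]$, and $\phi''=f\ge 0$ so $\phi$ is moreover convex and nonincreasing) and $\lambda$ is bounded on $[0,T]$, so the triangular system can be solved backward from $t=T$ one index at a time by Picard--Lindel\"of.

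The ordering $\alpha_1(t)\ge\cdots\ge\alpha_n(t)\ge 0$ I would get from a standard backward-in-time comparison: the curves coincide at $t=T$, and at any would-be first crossing $\alpha_j(t)=\alpha_{j+1}(t)$ (with $\alpha_{j-1}(t)\ge\alpha_j(t)$ still holding) one has $\alpha_j'(t)-\alpha_{j+1}'(t)=-\lambda(t)\bigl(\phi(\alpha_j(t))-\phi(\alpha_{j-1}(t))\bigr)\le 0$ because $\phi$ is nonincreasing, so the curves cannot cross; this is exactly concavity of $j\mapsto V_j(t)$, i.e.\ an extra slot is worth less the more slots one already holds. Finally one needs a verification step to conclude that the threshold policy defined by $\{\alpha_j\}$ is globally optimal and not merely a critical point: for any admissible policy, comparing its expected reward-to-go with $V_j(\cdot)$ along the arrival epochs via the integral form of the HJB equation (and optional sampling for the resulting martingale) gives an inequality that is tight precisely for the threshold policy.

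The main obstacle is making the HJB derivation rigorous: the heuristic ``$dt$'' expansion must be replaced by an honest argument --- conditioning on the a.s.\ finite set of arrival epochs and invoking the strong Markov property, or passing to the limit from a time-discretized problem --- to establish that each $V_j$ is absolutely continuous and satisfies the corresponding integral equation, after which the ODEs and the verification theorem follow. By comparison, the well-posedness of the ODE system and the monotonicity induction are routine.
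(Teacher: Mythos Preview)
The paper does not supply its own proof of this statement; it is quoted directly from \cite[Theorem~2]{albright1974optimal} and used as a black box. Your dynamic-programming/HJB derivation is essentially the argument Albright gives (his 1974 paper works with the value functions $V_j$ and obtains the ODEs by the same infinitesimal conditioning, though without the modern HJB vocabulary), so there is nothing to compare against in the present paper.

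On the substance, your sketch is correct: the identification $\alpha_j=V_j-V_{j-1}$, the computation $-V_j'=\lambda\,\phi(\alpha_j)$, the differencing, the Lipschitz bound $\phi'(\alpha)=F(\alpha)-1\in[-1,0]$, and the verification outline are all sound. The only place that deserves a little more care is the ordering argument: because all $\alpha_j$ coincide at $t=T$, a ``first crossing backward in time'' argument needs an initial separation step. A clean way is induction on $j$: $\alpha_1(t)\ge 0$ since $\alpha_1'=-\lambda\,\phi(\alpha_1)\le 0$ with $\alpha_1(T)=0$; then assuming $\alpha_{j-1}\ge\alpha_j$, the difference $\delta=\alpha_j-\alpha_{j+1}$ satisfies $\delta(T)=0$ and $\delta'=-\lambda\bigl(\phi(\alpha_j)-\phi(\alpha_{j-1})\bigr)+\lambda\bigl(\phi(\alpha_{j+1})-\phi(\alpha_j)\bigr)$, whose sign is controlled by the monotonicity of $\phi$ once the inductive hypothesis is in place. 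This is exactly the concavity of $j\mapsto V_j(t)$ you identified.
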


In our setting, the adaptive thresholds  maximize the   expected  sum  of  scores, which in turn selects the events with the highest scores, i.e., the most suspicious ones.

\begin{theorem}\label{thm:DT}
% The expected fraction of fraudulent transactions inspected by selecting transactions as described in Sec.~\ref{subsec:DT}, denoted by
% $\Psi_{DT}(k) $,  is
The fraud detection rate when selecting transactions as described in Sec.~\ref{subsec:DT}, denoted by $\Psi_{DT}(k) $, is
\begin{align}
\Psi_{DT}(k)\geq \frac{1}{ \Lambda({\tau})}\sum\limits_{j=1}^{n_k}\mathbb{E}_{T_1}\dots\mathbb{E}_{T_j} \Big[q_{\alpha_{\rho(j)}}\Big(\sum_{i=1}^j T_i\Big) \Big],    
\end{align}
with $\rho(j)\coloneqq n_k-j+1$, and 
\begin{align}
q_{\alpha_j}(t) \coloneqq   \, \frac{1-F_1(\alpha_j(t))}{ 1-F_S(\alpha_j(t))}, \label{eq:DT q}
% &q_{\alpha_j}(t) \coloneqq  \frac{\beta(1-F_1(\alpha_j(t)))}{\beta(1-F_1(\alpha_j(t)))+(1-\beta)(1-F_0(\alpha_j(t)))},\\
\end{align}
and where thresholds $\{\alpha_j(t)\}_{j=1}^{n_k}$ are derived from Theorem~\ref{thm:albright}. For $i=1,\dots,n_k$, we have
% \begin{align}
% f_{T_i}(t) =  m_i(T_{i-1}+t) \exp\Big(-\int_{0}^{t} m_i(T_{i-1}+u)du \Big), \label{eq:DT f}
% \end{align}
\begin{align}
f_{T_i}(t) =  m_i\Big(\sum_{\ell=1}^{i-1} T_{\ell}+t\Big) \exp\Big(-\int_{0}^{t} m_i\Big(\sum_{\ell=1}^{i-1} T_{\ell}+u\Big)du \Big), \notag%\label{eq:DT f}
\end{align}
with $T_0=0$ and $m_i(t) \coloneqq (1-F_S(\alpha_{\rho(i)}(t))) \lambda(t)$.
\end{theorem}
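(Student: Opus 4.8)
The plan is to reduce the ratio $\Psi_{DT}(k)=\mathbb{E}[A/B]$, where $A\coloneqq\sum_{i\in\mathcal I(k)}Y_i$ is the number of detected frauds and $B\coloneqq\sum_{i=1}^{N(\tau)}Y_i=N_1(\tau)$ the number of frauds (here $\mathbb{E}_X[A]/B$ and $\mathbb{E}_X[A/B]$ agree because $B$ does not depend on the feature vectors, so indeed $\Psi_{DT}(k)=\mathbb{E}[A/B]$), first to $\mathbb{E}[A]/\mathbb{E}[B]$ and then to an explicit expression for $\mathbb{E}[A]$. By Proposition~\ref{prop:splitting} we have $\mathbb{E}[B]=\beta\Lambda(\tau)$, so it suffices to (i) compute $\mathbb{E}[A]$, showing that $\mathbb{E}[A]/\mathbb{E}[B]$ is exactly the claimed right-hand side, and (ii) prove $\mathbb{E}[A/B]\ge\mathbb{E}[A]/\mathbb{E}[B]$.

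First I would set up a ``slot'' decomposition of $A$. Using the splitting, the stream is the independent superposition of the fraud process $N_1$ (rate $\beta\lambda$, scores $\sim f_1$) and the non-fraud process $N_0$ (rate $(1-\beta)\lambda$, scores $\sim f_0$). A transaction whose score falls below the running critical curve is discarded permanently and future arrivals are independent of the past, so the pair (inspections remaining, current time) is Markov: after the $(i{-}1)$-th inspection, when $\rho(i)=n_k-i+1$ slots remain, the next inspection is triggered by the first subsequent arrival whose score exceeds $\alpha_{\rho(i)}(\cdot)$. By independent thinning of an NHPP, that first-passage time is exactly $T_i$ with the stated density, driven by the thinned rate $m_i(t)=(1-F_S(\alpha_{\rho(i)}(t)))\lambda(t)$; the mass defect $1-\int_0^\infty f_{T_i}$ is the probability the $i$-th slot never fires before $\tau$. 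Writing $\Sigma_j\coloneqq\sum_{i=1}^j T_i$ and letting $V_j$ indicate that slot $j$ fires and catches a fraud, $A=\sum_{j=1}^{n_k}V_j$. Conditioned on slot $j$ firing at time $\Sigma_j$, the selected transaction is a fresh arrival whose score exceeds $\alpha_{\rho(j)}(\Sigma_j)$, so by Bayes' rule — equivalently, comparing the competing thinned fraud/non-fraud intensities at that instant — it is fraudulent with probability $\beta(1-F_1(\alpha_{\rho(j)}(\Sigma_j)))/(1-F_S(\alpha_{\rho(j)}(\Sigma_j)))=\beta\,q_{\alpha_{\rho(j)}}(\Sigma_j)$, independently of the conditioning on the past. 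Summing, $\mathbb{E}[A]=\beta\sum_{j=1}^{n_k}\mathbb{E}_{T_1}\!\cdots\mathbb{E}_{T_j}\big[q_{\alpha_{\rho(j)}}(\Sigma_j)\big]$ (the defect contributing $0$), hence $\mathbb{E}[A]/\mathbb{E}[B]$ equals the claimed bound.

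Finally I would upgrade $\mathbb{E}[A]/\mathbb{E}[B]$ to $\Psi_{DT}(k)$. Conditioning on $N_1(\tau)=m$, the $m$ frauds carry i.i.d.\ (time, score) marks and $N_0$ is independent, so by exchangeability $\mathbb{E}[A\mid N_1(\tau)=m]=m\,p_m$, where $p_m$ is the probability that a fixed one of the $m$ frauds is inspected; consequently $\Psi_{DT}(k)=\sum_{m\ge1}P(N_1(\tau)=m)\,p_m$ whereas $\mathbb{E}[A]/\mathbb{E}[B]=\sum_{m\ge1}P(N_1(\tau)=m{-}1)\,p_m$. Since the weight ratio $P(N_1(\tau)=m)/P(N_1(\tau)=m{-}1)=\beta\Lambda(\tau)/m$ is decreasing in $m$ and $p_m$ is non-increasing in $m$ (a fixed inspection budget shared among more frauds can only lower each one's chance), Chebyshev's sum inequality gives $\Psi_{DT}(k)\ge\mathbb{E}[A]/\mathbb{E}[B]$, which is the desired inequality.

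I expect the main obstacle to be the two steps I have only sketched: (a) the monotonicity $p_{m+1}\le p_m$, which I would establish by a coupling that inserts an extra fraud into the running process while keeping the common time-varying curves $\alpha_j(\cdot)$ fixed, so some care is needed to argue the insertion never helps a pre-existing fraud; and (b) making the first-passage/slot decomposition with its defective densities fully rigorous, since the threshold drops at each inspection and the relevant object is a Markov chain on (budget, time) rather than an ordinary renewal process. A minor caveat I would flag is that the inequality $\Psi_{DT}(k)\ge\mathbb{E}[A]/\mathbb{E}[B]$ is clean only after interpreting $\Psi_{DT}(k)$ conditionally on $N_1(\tau)\ge1$ (equivalently, neglecting the $O(e^{-\beta\Lambda(\tau)})$ contribution of the fraud-free sample path), which the definition in \eqref{eq:objective} leaves implicit.
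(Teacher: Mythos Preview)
Your slot decomposition, the thinned intensities $m_i$ obtained by Poisson splitting, the inter-inspection waiting-time density from the NHPP first-passage formula, and the conditional fraud probability $\beta\,q_{\alpha_{\rho(j)}}$ at the instant of selection are exactly the ingredients the paper uses; the computation of $\mathbb{E}[A]$ is line-for-line the same. The only difference is the ratio step $\mathbb{E}[A/N_1(\tau)]\geq \mathbb{E}[A]/\mathbb{E}[N_1(\tau)]$: the paper simply asserts ``$\mathbb{E}[x/y]\geq\mathbb{E}[x]/\mathbb{E}[y]$ for $\mathbb{E}[x]\geq 0$'' and moves on, whereas you supply an actual argument via exchangeability and monotonicity of $p_m$. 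Your route is the more careful one here, since that inequality is not valid for general dependent $x,y$, and $A$ and $N_1(\tau)$ are certainly dependent. One remark on your argument: the unconditioned comparison $\sum_{m\geq 1} P(N_1{=}m)\,p_m \geq \sum_{m\geq 1} P(N_1{=}m{-}1)\,p_m$ can fail as written (take $p_m\equiv 1$: the left side is $1-e^{-\beta\Lambda(\tau)}$, the right side is $1$), so the conditioning on $N_1(\tau)\geq 1$ you flag at the end is essential rather than cosmetic; after zero-truncation both weight sequences are probability measures on $\{1,2,\dots\}$, the decreasing likelihood ratio $\beta\Lambda(\tau)/m$ gives first-order stochastic dominance of the truncated Poisson under the shifted one, and the inequality for non-increasing $p_m$ then follows (this is a stochastic-dominance statement rather than Chebyshev's sum inequality proper). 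Your coupling for $p_{m+1}\leq p_m$ is sound: the inserted fraud can reduce the remaining budget at any later time by at most one, which can only raise the critical curve faced by the tagged fraud.
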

\begin{proof}
The proof is given in Appendix~\ref{app:DT}.
\end{proof}

\subsection{Random Sampling}
The inspector selects a random subset of transactions, irrespective of the classifier  scores, and as stated in the following theorem, the detection rate is a linear function of the inspection capacity.   
\begin{theorem}\label{thm:RS}
% The expected fraction of fraudulent transactions selected for inspection by random sampling, denoted by $\Psi_{RS}(k) $,  is
The fraud detection rate using random sampling given an inspection capacity of $k$, denoted by $\Psi_{RS}(k) $,  is
 \begin{align}
\Psi_{RS}(k)  \geq  k- 1/\Lambda(\tau).
\end{align}
\end{theorem}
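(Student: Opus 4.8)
The plan is to obtain Theorem~\ref{thm:RS} as the degenerate, ``no‑skill'' special case of Theorem~\ref{thm:FT}, so that no new technical machinery is needed. By definition, random sampling is the policy one gets from the static‑threshold rule when the classifier returns the constant score $G(X)\equiv\tfrac12$ for every $X$: under such a classifier every arriving transaction clears any threshold $\alpha\in[0,\tfrac12)$, so the static‑threshold policy simply inspects the first $n_k$ arrivals in time order until the budget is exhausted. First I would note that, because the fraudulent and non‑fraudulent arrivals are produced by an independent thinning of the NHPP (Proposition~\ref{prop:splitting}), conditioning on $N(\tau)$ and on the arrival epochs makes the fraud labels an exchangeable family; hence inspecting the first $n_k$ arrivals detects, in expectation, the same fraction of the $N_1(\tau)$ frauds as inspecting a uniformly random size‑$n_k$ subset, i.e.\ it yields the same $\Psi(k)$. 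This identifies $\Psi_{RS}(k)$ with $\Psi_{FT}(k)$ evaluated at this constant classifier.

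The remaining steps are mechanical. Evaluate the score distributions for the constant classifier: since $S=G(X)=\tfrac12$ almost surely, irrespective of $Y$, we have $F_0(\alpha)=F_1(\alpha)=F_S(\alpha)=0$ for every $\alpha<\tfrac12$. Substituting $F_1(\alpha)=F_S(\alpha)=0$ into the bound of Theorem~\ref{thm:FT} gives $\Psi_{RS}(k)\ge(1-0)\,\min\{(k-1/\Lambda(\tau))/(1-0),\,1\}=\min\{k-1/\Lambda(\tau),\,1\}$. Finally, since $0\le k\le 1$ and $\Lambda(\tau)>0$ we have $k-1/\Lambda(\tau)\le 1-1/\Lambda(\tau)<1$, so the minimum equals $k-1/\Lambda(\tau)$, which is exactly the claimed bound.

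The one step I expect to require genuine care is the identification in the first paragraph --- checking that the ``first $n_k$ arrivals'' rule induced by a constant‑score static threshold is a faithful model of uniform random sampling --- since everything after it is a direct substitution; and it is precisely the rounding $n_k=\lfloor k\Lambda(\tau)\rfloor\ge k\Lambda(\tau)-1$ already present in Theorem~\ref{thm:FT} that produces the $-1/\Lambda(\tau)$ term here. If a self‑contained argument is preferred, the cleanest route is to adopt the model of random sampling in which each arrival is inspected independently with probability $n_k/\Lambda(\tau)$ (so the expected number inspected is $n_k$): then each of the $N_1(\tau)$ frauds is inspected with probability exactly $n_k/\Lambda(\tau)$, whence $\Psi_{RS}(k)=n_k/\Lambda(\tau)\ge k-1/\Lambda(\tau)$ with no further estimation. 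Under the alternative model that picks a uniformly random size‑$n_k$ subset of the $N(\tau)$ arrivals one instead gets $\Psi_{RS}(k)=\mathbb{E}_N[\min\{n_k,N(\tau)\}/N(\tau)]$, which still exceeds $k-1/\Lambda(\tau)$, the $1/\Lambda(\tau)$ slack absorbing the fluctuations of $N(\tau)$ about its mean $\Lambda(\tau)$; bounding this latter expectation from below is the only mildly delicate computation in that route.
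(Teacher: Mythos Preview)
Your proposal is correct, but the route you take as your primary argument is genuinely different from the paper's. The paper does \emph{not} reduce to Theorem~\ref{thm:FT}; it gives a direct two-line computation: model random sampling as selecting each of the $N(\tau)$ arrivals with (marginal) probability $n_k/N(\tau)$, so that each fraud is picked with that same probability and hence $\Psi_{RS}(k)=\mathbb{E}_N[n_k/N(\tau)]$; then apply Jensen's inequality to $x\mapsto 1/x$ to get $\mathbb{E}_N[n_k/N(\tau)]\ge n_k/\Lambda(\tau)>k-1/\Lambda(\tau)$. This is precisely the ``uniformly random size-$n_k$ subset'' variant you sketch in your final paragraph, and the ``mildly delicate computation'' you anticipate is simply Jensen.

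Your main reduction is a nice bit of economy: once Theorem~\ref{thm:FT} is in hand, nothing further needs to be proved. The cost is the exchangeability identification between ``first $n_k$ arrivals under a constant classifier'' and ``uniform random $n_k$-subset'', which is correct here because under the independent-thinning model the labels $Y_i$ are i.i.d.\ Bernoulli$(\beta)$ independently of the arrival epochs. The paper's direct argument is shorter and self-contained, and it makes explicit where the $-1/\Lambda(\tau)$ comes from (the floor in $n_k$) and where the inequality comes from (Jensen), whereas in your reduction both are inherited opaquely from the proof of Theorem~\ref{thm:FT}.
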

\begin{proof}
The proof is given in  Appendix~\ref{app:RS}.
\end{proof}

\subsection{Batch Processing}
In this case, the inspector selects $n_k$ transactions that have the highest scores  among all $N(\tau)$ transactions. Therefore, a fraudulent transaction is selected, irrespective of its arrival time, if it is among the $n_k$ transactions with the largest scores.

\begin{theorem}\label{thm:BP}
The fraud detection rate with batch processing, denoted by $\Psi_{BP}(k) $, is
\begin{align}
&\Psi_{BP}(k) \geq \frac{1}{\beta \Lambda(\tau)}\mathbb{E}_{ N_0,N_1}  \Big[   \sum\limits_{j=1}^{\min\{n_k,N_1(\tau)\}} F_{W_{j}}  (0)\Big],   \\ 
\text{where  }&\;\;F_{ W_{j}}(w)
   = \int_{0}^1 \int_{0}^{w+s_1} f_{S^{(0)}_{\rho_0(j)}}(s_0) f_{S^{(1)}_{\rho_1(j)}}( s_1) ds_0 ds_1 ,\notag
\end{align}
with $\rho_0(j)\coloneqq N_0(\tau)-n_k+j$ and $\rho_1(j)\coloneqq N_1(\tau)-j+1$, and the PDF $f_{S^{(i)}_{\rho_i(j)}}(s)$, $i=0,1$, is derived by Lemma~\ref{lemma:orderstat} in Appendix~\ref{app:BP}.
\end{theorem}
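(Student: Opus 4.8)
The plan is to condition on the realized numbers of arrivals in the two sub-processes and reduce the statement to a question about order statistics of two independent samples. By Proposition~\ref{prop:splitting}, conditionally on $N_0(\tau)=n_0$ and $N_1(\tau)=n_1$ the non-fraudulent scores are $n_0$ i.i.d.\ draws from $f_0$, the fraudulent scores are $n_1$ i.i.d.\ draws from $f_1$, and the two families are independent (arrival times are irrelevant here, since batch processing ignores them). Writing $D$ for the number of selected fraudulent transactions, I would express $D=\sum_{j\ge 1}\mathbbm{1}\{\text{$j$-th largest fraudulent score is among the $n_k$ largest scores overall}\}$, so that from \eqref{eq:objective} we have $\Psi_{BP}(k)=\mathbb{E}_{N_0,N_1}\big[\mathbb{E}_X[D\mid N_0,N_1]/N_1(\tau)\big]$.

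The key step is the characterization of the event inside the indicator. Exactly $j-1$ fraudulent transactions have a strictly larger score than the $j$-th largest one, so that transaction is among the top $n_k$ if and only if at most $n_k-j$ \emph{non}-fraudulent transactions have a larger score; since the scores are continuous, this holds almost surely iff the $(n_k-j+1)$-th largest non-fraudulent score does not exceed the $j$-th largest fraudulent score. Translating ``$m$-th largest of $N$ samples'' into the order-statistic index $N-m+1$ gives precisely $\rho_0(j)=N_0(\tau)-n_k+j$ and $\rho_1(j)=N_1(\tau)-j+1$, and the event becomes $W_j\coloneqq S^{(0)}_{\rho_0(j)}-S^{(1)}_{\rho_1(j)}\le 0$. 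Only ranks $j=1,\dots,\min\{n_k,N_1(\tau)\}$ can contribute (ranks beyond $n_k$ cannot be selected, ranks beyond $N_1(\tau)$ do not exist; the degenerate case $\rho_0(j)\le 0$, which forces selection and has negligible probability when $k<1$, I would dispatch by the usual convention). Taking $\mathbb{E}_X$ and using independence of the two samples, $\mathbb{E}_X[D\mid N_0,N_1]=\sum_{j=1}^{\min\{n_k,N_1(\tau)\}}F_{W_j}(0)$; conditioning on $S^{(1)}_{\rho_1(j)}=s_1$ and invoking the order-statistic densities of Lemma~\ref{lemma:orderstat} yields the stated double-integral formula for $F_{W_j}(w)$.

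What remains, and what I expect to be the main obstacle, is passing from the random denominator $N_1(\tau)$ to its mean $\beta\Lambda(\tau)$, since $\mathbb{E}_X[D\mid N_0,N_1]$ is itself correlated with $N_1(\tau)$ and is not deterministic as in the proof of Theorem~\ref{thm:RS}. The cleanest route I see is to combine Jensen's inequality, $\mathbb{E}[1/N_1(\tau)]\ge 1/\mathbb{E}[N_1(\tau)]=1/(\beta\Lambda(\tau))$, with the observation that the conditional detection fraction $\mathbb{E}_X[D\mid N_0,N_1]/N_1(\tau)$ is non-increasing in $N_1(\tau)$ — more fraudulent arrivals against the same budget $n_k$ and the same non-fraudulent population can only lower the detected fraction (formally, the $j$-th largest of $n_1+1$ i.i.d.\ draws stochastically dominates the $j$-th largest of $n_1$, and $D/N_1\le n_k/N_1$) — hence it is negatively correlated with $N_1(\tau)$, giving $\mathbb{E}[D/N_1]\ge \mathbb{E}[D]/\mathbb{E}[N_1]$. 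Assembling the pieces delivers $\Psi_{BP}(k)\ge \tfrac{1}{\beta\Lambda(\tau)}\mathbb{E}_{N_0,N_1}\big[\sum_{j=1}^{\min\{n_k,N_1(\tau)\}}F_{W_j}(0)\big]$. The two points that need care are the index bookkeeping for $\rho_0$ and $\rho_1$ and the monotonicity claim underpinning the Jensen step; the rest is routine.
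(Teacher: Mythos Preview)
Your argument is essentially the paper's: both condition on $(N_0(\tau),N_1(\tau))$, reduce selection of the $j$-th best fraud to the order-statistic comparison $S^{(1)}_{(\rho_1(j))}>S^{(0)}_{(\rho_0(j))}$, and then replace the random denominator $N_1(\tau)$ by its mean $\beta\Lambda(\tau)$. The one substantive difference is how that last step is handled. The paper dispatches it in a single line, asserting $\mathbb{E}[x/y]\ge \mathbb{E}[x]/\mathbb{E}[y]$ for $\mathbb{E}[x]\ge 0$ (the same device appears verbatim in the proofs of Theorems~\ref{thm:FT} and~\ref{thm:DT}), with no further justification. You are more scrupulous in flagging this as the delicate point and proposing a negative-correlation argument; however, your pathwise claim that adding one fraudulent arrival ``can only lower the detected fraction'' is not literally true---a new high-scoring fraud can displace a non-fraud from the top $n_k$, sending $D\mapsto D+1$ and hence raising $D/N_1$ whenever $D<N_1$---so the monotonicity would have to be argued in expectation, which you leave open. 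In short, your route matches the paper's and your extra care is warranted; the paper simply does not engage with the issue you identify.
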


\begin{proof}
The proof is given in Appendix~\ref{app:BP}.
\end{proof}

For general functions of $F_0(s), F_1(s)$ and $F_S(s)$, the detection rate $\Psi(k)$ may not exist in closed-form in Theorems~\ref{thm:DT} and~\ref{thm:BP}. We approximate this function through Monte Carlo experiments in Sec.~\ref{subsec:numerical}.

 %%%%%%%%%%%%%%%%%%%%%%%%%%%%%%%%%%%%%%%%%%%%%%%%%%%%%%%%%%%%%%%%%%%%%%%%
 \begin{figure*}%
    \captionsetup[subfigure]{aboveskip=-1pt,belowskip=-1.5em}
    \centering
    \subfloat[\centering]{{\includegraphics[width=0.45\linewidth]{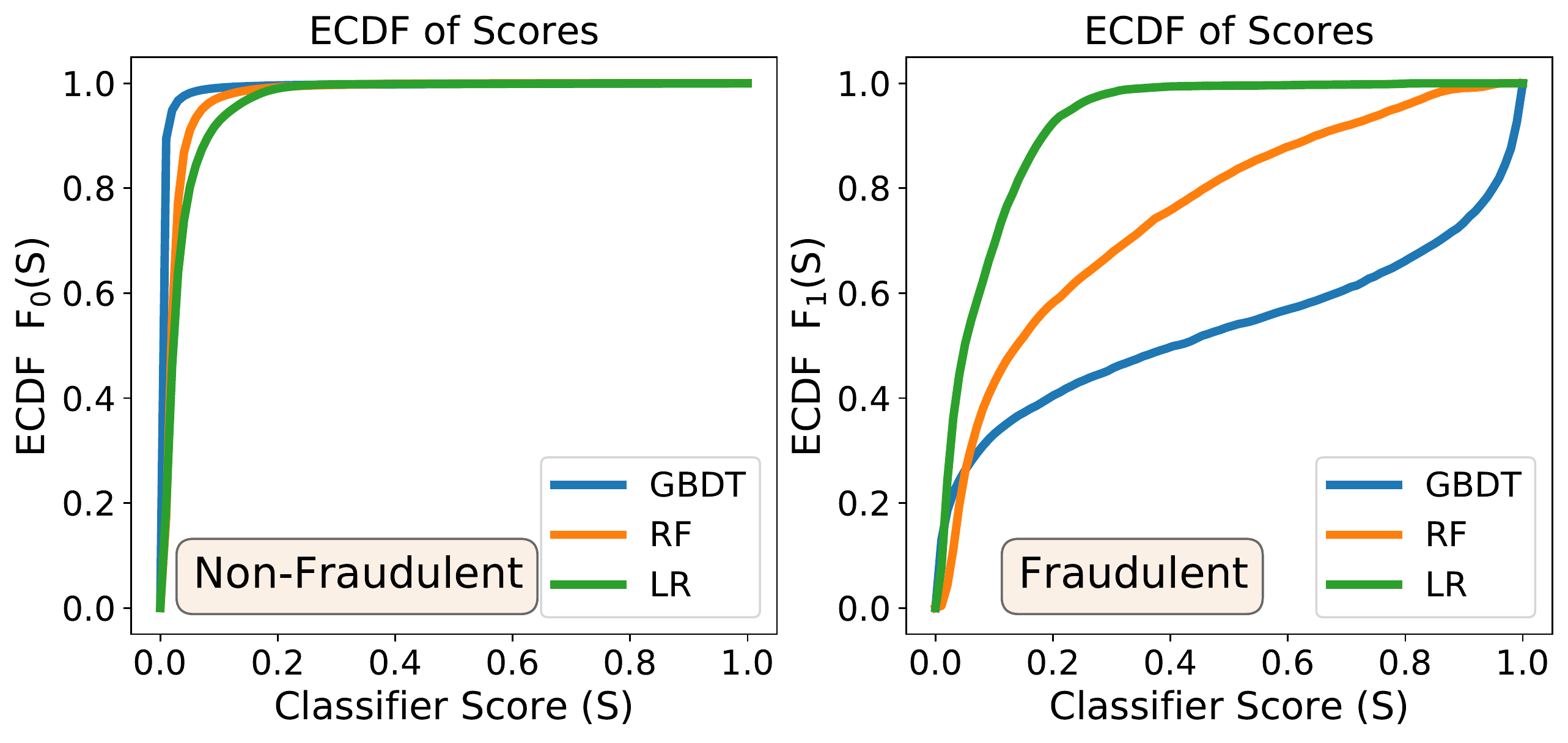} }}%
    \;
    \subfloat[\centering]{{\includegraphics[width=0.45\linewidth]{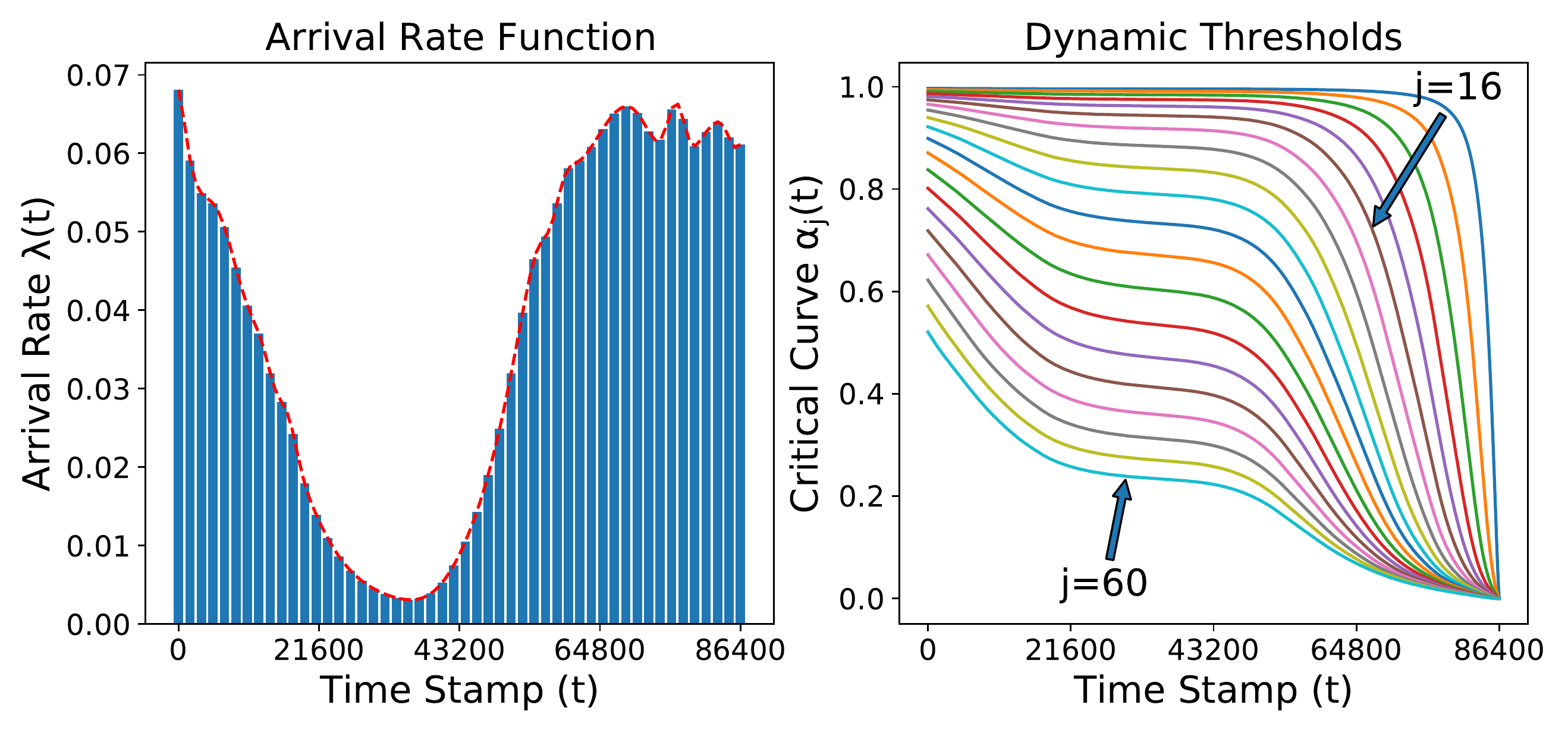} }}%
    \caption{IEEE-fraud-$3.5\%$ dataset: (a) ECDF of classifier  scores assigned to non-fraudulent (left) and fraudulent (right) transactions. (b) Estimated time-varying arrival rate $\lambda(t)$ (left), dynamic thresholds $\{\alpha_j(t)\}$ for $60$ inspections derived from Theorem~\ref{thm:albright} (right). 
    }%
    \label{fig:classifier-arrival}%
\end{figure*}

\section{Experiments}\label{sec:experiments}
In this section, we use a public dataset to compute the detection rate-inspection capacity tradeoff for each sampling method. We compare the analytical bounds on the tradeoff, derived in Sec.~\ref{sec:prob}, with the observed average tradeoff obtained empirically using each sampling method. We use the IEEE-CIS Fraud Detection (IEEE-fraud) Dataset
\footnote{Available at: \url{https://www.kaggle.com/c/ieee-fraud-detection/data}.}, provided by Vesta Corporation containing over 1 million
real-world e-commerce transactions, comprising of more than 400 feature variables, time stamps (secs) and fraud labels. The  dataset contains 183 days of transactions, with  $3.5\%$ of samples labeled fraudulent. In order to demonstrate how the class imbalance impacts the detection rate, we modify the imbalance by up-sampling (SMOTE \cite{chawla2002smote}) and down-sampling (uniformly at random) the minority class to make up $8\%$ and $1.5\%$ of the transactions, respectively. We refer to the dataset with $x\%$ frauds as IEEE-fraud-$x\%$.

\paragraph{Arrival Rate, Classifier Score Densities and  Dynamic Thresholds: }
In our experiments, we consider each interval $[0,\tau]$ to be one day with $\tau=86400$ seconds, and use a random $50\%$-$25\%$-$25\%$ split of the days as follows: $i)$ We use the first half of data to train three classifiers with different predictive powers: gradient boosted decision trees (GBDT),  random forests (RF) and  logistic regression (LR).  The training results on all three datasets using the AUC of the ROC as the evaluation metric are reported in Table~\ref{tab:train}. $ii)$ We use the second part of  data to estimate the Empirical Cumulative Distribution Function (ECDF) of the classifier scores, $F_0(s)$ and $F_1(s)$, shown in Fig.~\ref{fig:classifier-arrival}(a) for the IEEE-fraud-$3.5\%$ dataset. As expected, a more powerful classifier assigns higher scores to fraudulent transactions with much higher probability. We use the method in \cite{lewis1976statistical} to estimate the rate function $\lambda(t)$, shown in Fig.~\ref{fig:classifier-arrival}(b), which is used to compute the time-dependent thresholds used in the Dynamic Thresholds method. $iii)$ Finally, we use the last part of data for our empirical sampling experiments discussed in the following. A more detailed description of the experiment setup is provided in Appendix~\ref{app:experiments}.  
Additionally, in order to investigate how estimation errors or model assumptions, e.g., independence of classifier scores and time of arrivals, affect the empirical results, we simulate data based on our estimated $\lambda(t)$, $F_0(s)$ and $F_1(s)$, which we refer to as \textit{simulated data}.

 \begin{table}\setlength{\tabcolsep}{3pt}
\captionsetup[table]{skip=-1em}
\caption{Classifier learned on IEEE-fraud dataset.}
\label{tab:train}
\centering
\resizebox{0.85\linewidth}{!}{%
\begin{tabular}{lcccc}
\toprule
Dataset &  Parameters  & Classifier  &  AUC   & AUC   \\
  &   &    &  (Valid) &  (Test) \\
%  &   &   &  (Valid)  & (Test) \\
\midrule
                 &      &   GBDT &  $0.985$ & $0.979$  \\
IEEE-fraud-$8\%$ & $\beta = 0.08$ & RF     &  $0.972$  & $0.953$  \\
                 & $\Lambda(\tau) = 3378 $   & LR   & $0.809$  &$0.815$  \\
\midrule
  &   &  GBDT &   $0.966$ & $0.951$  \\
 IEEE-fraud-$3.5\%$  &  $\beta=0.035$  & RF &$0.963$ &$0.918$  \\
    &  $\Lambda(\tau) = 3219 $ &   LR   &$0.687$ &$0.697$  \\
\midrule
  &  & GBDT &   $0.945$ & $0.935$ \\
  IEEE-fraud-$1.5\%$ & $\beta = 0.015$ & RF& $0.976$  &$0.904$ \\
 
  &  $\Lambda(\tau) = 3154  $ & LR  &  $0.688$  &$0.694$ \\
\bottomrule
\end{tabular}
}
% \vspace{-1em}

\end{table}

 \begin{figure*}%
    \captionsetup[subfigure]{aboveskip=-1pt,belowskip=-2em}
    \centering
    \subfloat[\centering ]{{\includegraphics[width=0.23\linewidth]{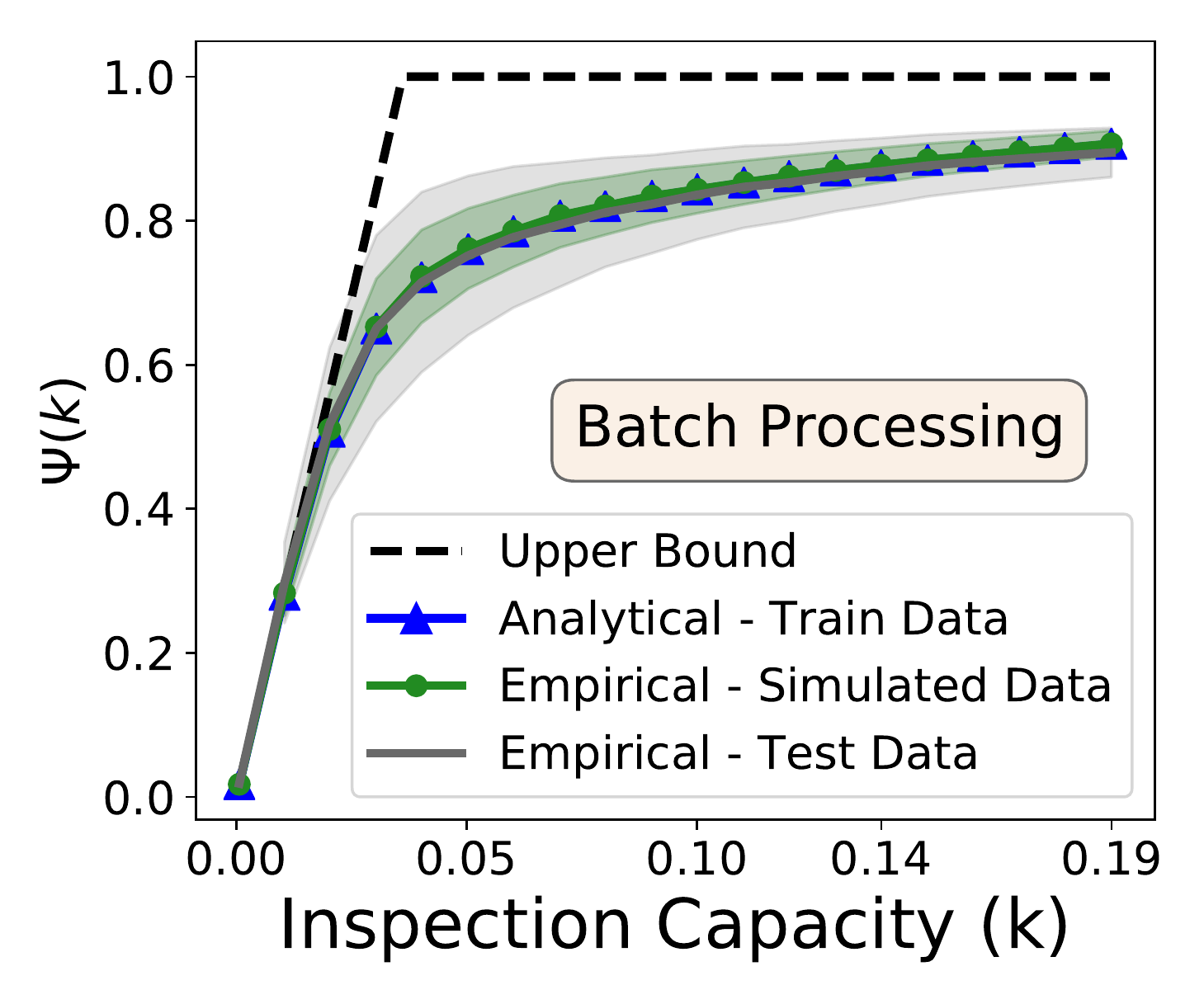} }}%
     \hspace{-0.9em}
    \subfloat[\centering ]{{\includegraphics[width=0.23\linewidth]{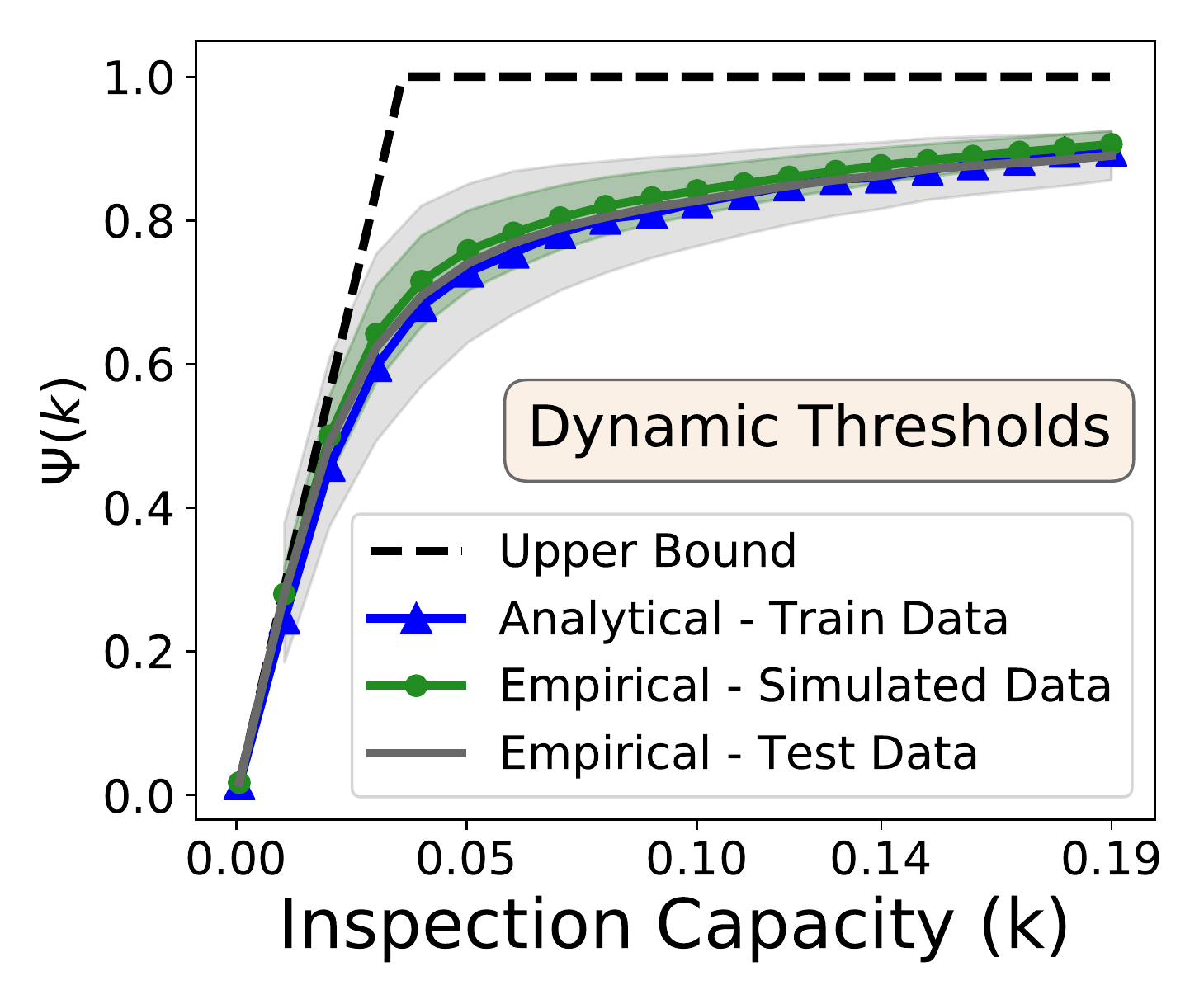} }}%
     \hspace{-0.9em}
    \subfloat[\centering ]{{\includegraphics[width=0.23\linewidth]{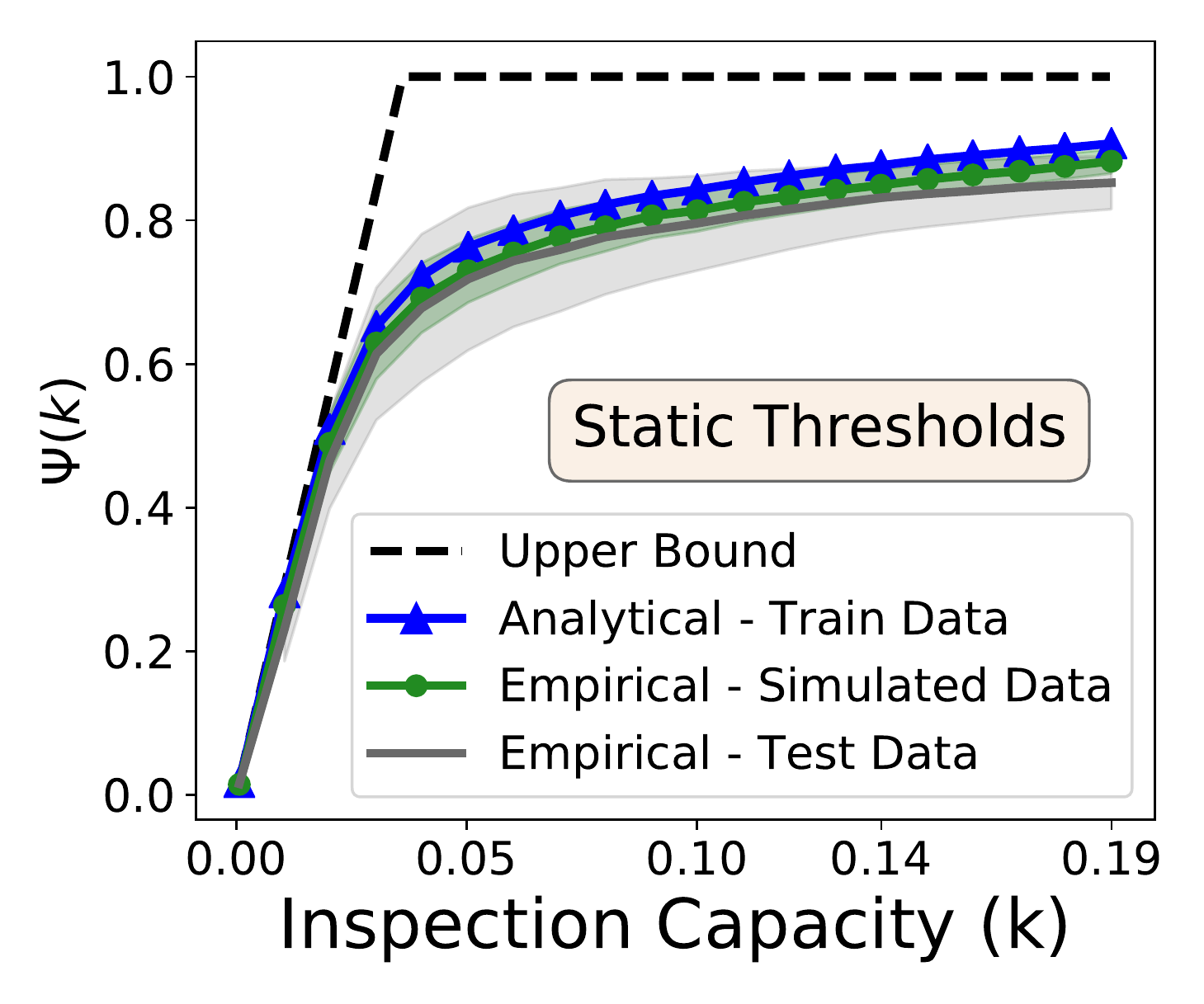} }}%
     \hspace{-0.9em}
    \subfloat[\centering ]{{\includegraphics[width=0.23\linewidth]{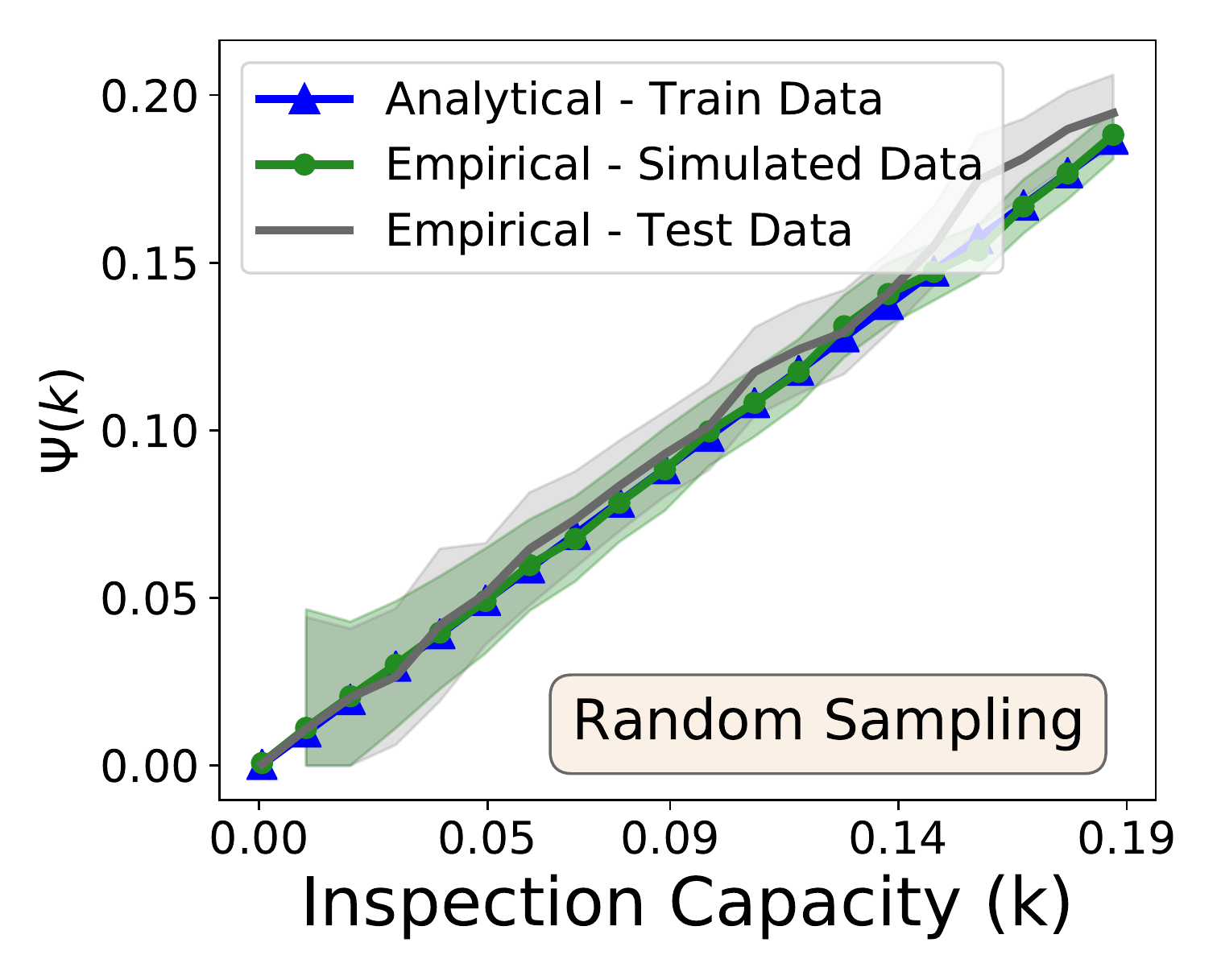} }}%
    \caption{Fraud detection rate-inspection capacity tradeoff for different sampling methods: (a) Batch Processing, (b) Dynamic Thresholds, (c) Static Thresholds, and (d) Random Sampling. The empirical results are within a small margin from the analytical bounds. 
    }%
    \label{fig:analytical}%
\end{figure*}

\subsection{Analytical Results}\label{subsec:numerical}
Fig.~\ref{fig:analytical} (a)-(d) displays the results on the IEEE-fraud-$3.5\%$ dataset with the GBDT classifier for $i)$ batch processing (BP), %which provides an upper bound on $\Psi(k)$, 
$ii)$ dynamic thresholds (DT), $iii)$ static thresholds (ST), and $iv)$ random sampling (RS), which corresponds to a no-skill classifier, % and provides a lower bound on $\Psi(k)$, 
for $k \in [0,\,0.2]$ equivalent to inspecting $0$-$20\,\%$ of the expected arrivals each day. The dashed line delineates an (not necessarily achievable) upper bound on the entire tradeoff when the learning of the classifier is also taken into account, derived in Appendix~\ref{app:UB}.  For each method, the expected tradeoff derived analytically, is very close to the experimental results on the test data and the simulated data. Specifically, the curves match almost perfectly for batch processing given that it is independent of the arrival process. With dynamic thresholds the difference between the analytical and empirical curves is much smaller compared to the static thresholds since the mismatch between the estimated arrival rate $\lambda(t)$ with the actual arrivals of fraudulent transactions affects the analytical bounds more in the case of fixed thresholds. Finally, as stated in Theorem~\ref{thm:RS}, for random sampling, the detection rate equals the inspection capacity $k$. The experiments on a real dataset show that the NHPP formulation of arrivals could be used for practical applications when inspecting streaming data.

\begin{figure}%
% \captionsetup[figure]{aboveskip=-1em,belowskip=-1em}
    \centering
\includegraphics[width=0.57\linewidth]{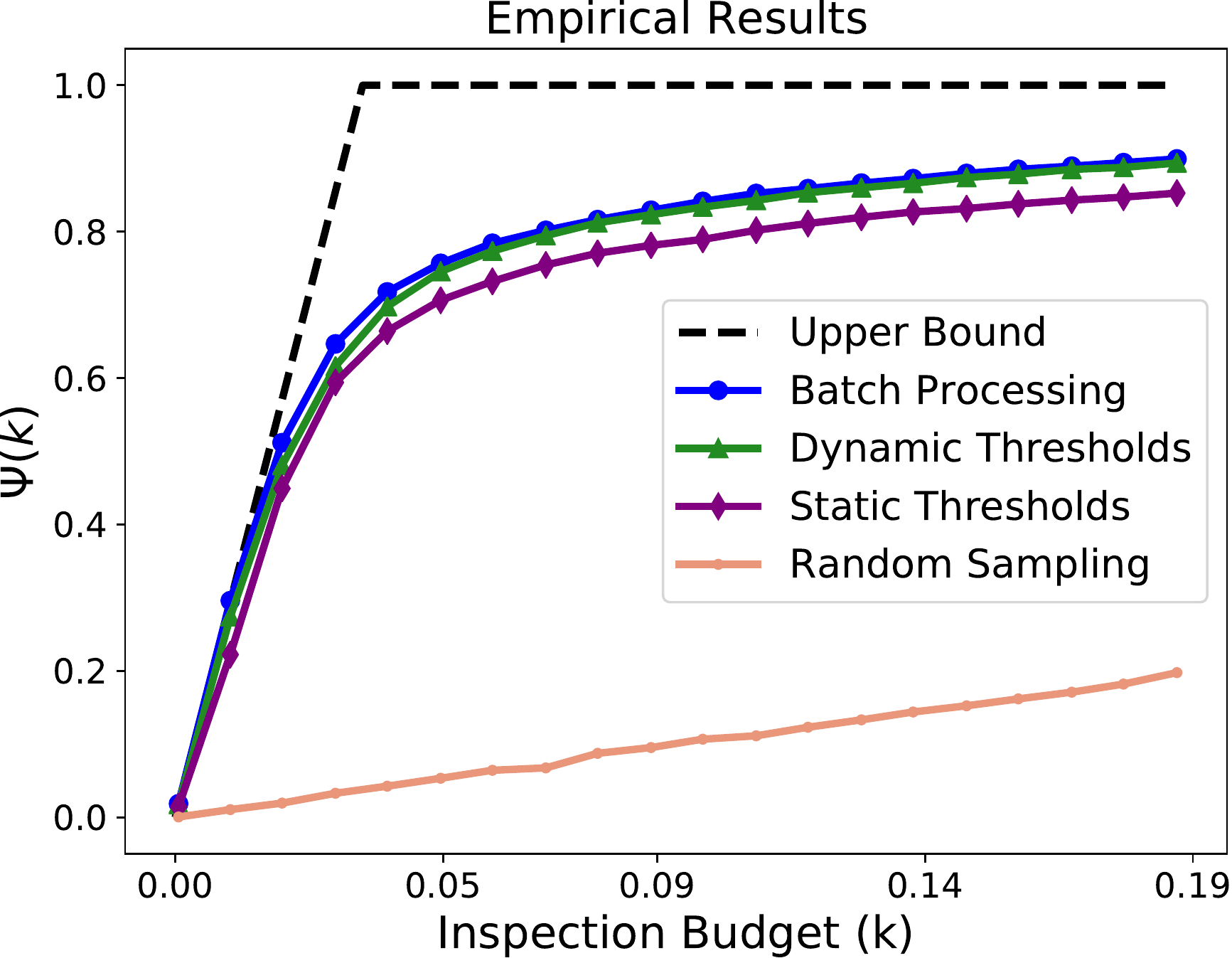}%
    \caption{IEEE-fraud-$3.5\%$ dataset. For the learned GBDT classifier, the dynamic thresholds method operates very closely to batch processing, and with the optimal fixed threshold defined in Theorem \ref{thm:FT-alpha}, the static thresholds method is within a small margin. Both methods approach the upper bound for very small inspection capacities.
    }%
    \vspace{-1em}
    \label{fig:methods}%
\end{figure}

 \subsection{Empirical Results}\label{subsec:realdata}
 Fig.~\ref{fig:methods} compares the tradeoff for different sampling methods on the IEEE-fraud-$3.5\%$ dataset with the GBDT classifier. As expected, given the sequential nature of arrivals, using adaptive thresholds to sample suspicious transactions outperforms using a static threshold. In fact, the dynamic thresholds method operates very closely (within a margin of $0.01$) to batch processing, which is not a practical method for timely decision-making itself and serves as an upper performance limit for all threshold-based strategies that sample transactions in real-time using a classifier. With the \textit{optimal} threshold derived in Theorem 3, the static threshold curve is within a margin of $0.05$ from the batch processing curve, and performs competitively to the dynamic thresholds method  especially for small $k$.  

 We investigate how other aspects of the imbalanced binary classification problem with limited resources, such as the minority-majority imbalance and the initial phase of learning a predictive classifier impacts the tradeoff. While we illustrate the results for dynamic thresholds, the tradeoffs for other methods show similar trends, and are provided in Appendix~\ref{app:experiments}.  

\paragraph{{Class Imbalance:}}  
Fig.~\ref{fig:effects} (a), shows the tradeoff on three datasets when using the GBDT classifier for predicting scores. Each curve is also compared with an upper bound (shown in dashed lines) when the end-to-end system is considered (see Appendix~\ref{app:UB}). As observed, with  adaptive thresholds, %computed based on the arrival rate, 
the  tradeoff matches the upper bound for very small inspection capacities $k$. As $k$ increases, the the detection rate diverges more from the upper bound in more imbalanced datasets, since it is much harder to learn powerful classifiers to discriminate between the minority and majority class samples.

\paragraph{Learning Phase:}  
Figs.~\ref{fig:effects} (b) shows the tradeoff on the IEEE-fraud-$3.5\%$ dataset for three classifiers with different predictive power. As expected, for the classifier with a higher AUC, which is able to better distinguish a fraudulent sample from a non-fraudulent one,  the tradeoff curve is strictly superior across all inspection capacities. This is especially pronounced for very small $k$, evident from the steep slope of the tradeoff corresponding to GBDT, which is tangent to the upper bound. A classifier with a larger AUC, with higher probability, will predict  a larger score for a sample from the minority class compared to the majority class. Therefore, the most suspicious samples can be prioritized for selection across the entire interval without using up the capacity too early.

\begin{figure}%
\captionsetup[subfigure]{aboveskip=-1pt,belowskip=-2em}
    \centering
    \subfloat[\centering ]{{\includegraphics[width=0.48\linewidth]{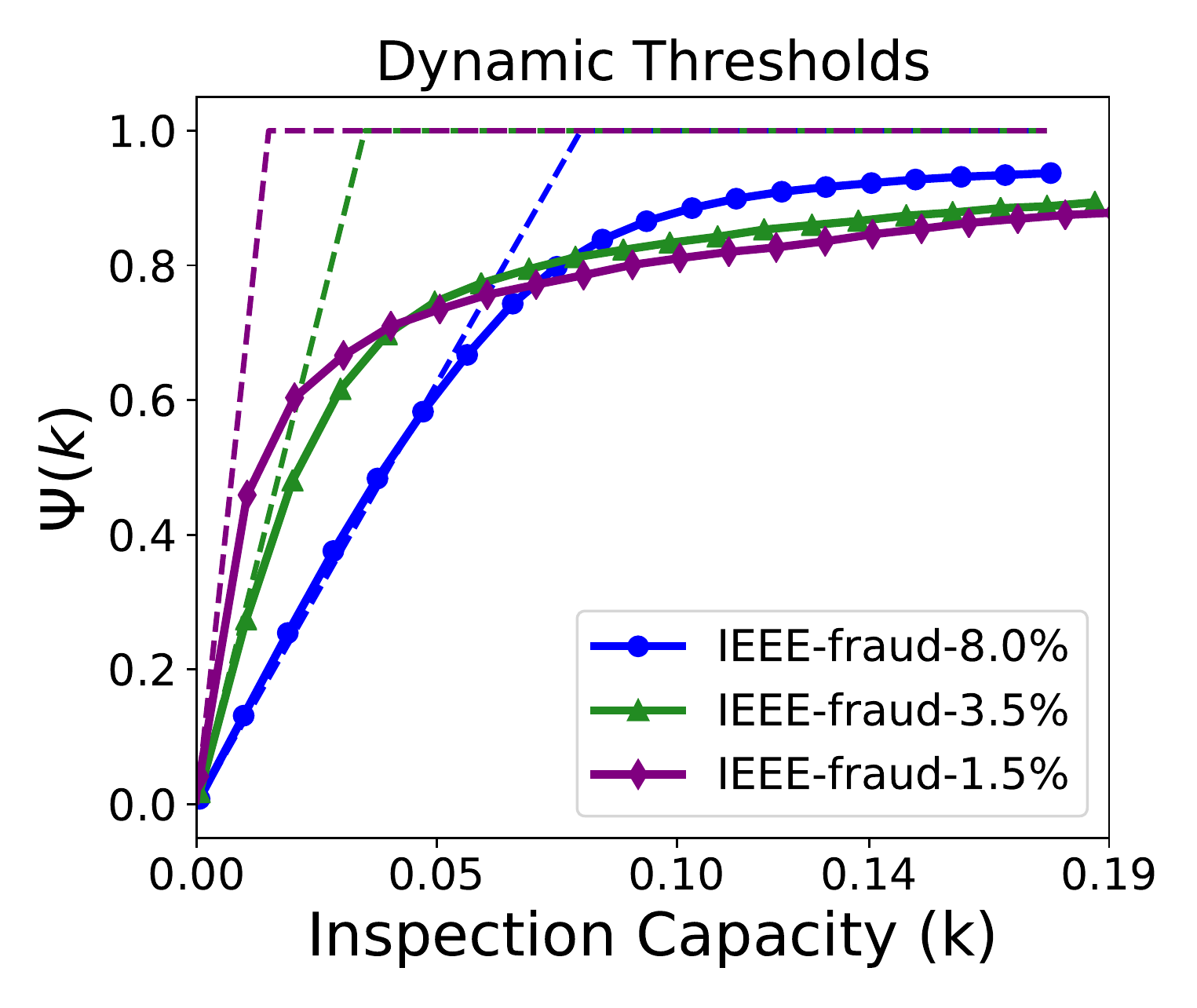} }}%
    \hspace{-0.9em}
    \subfloat[\centering ]{{\includegraphics[width=0.48\linewidth]{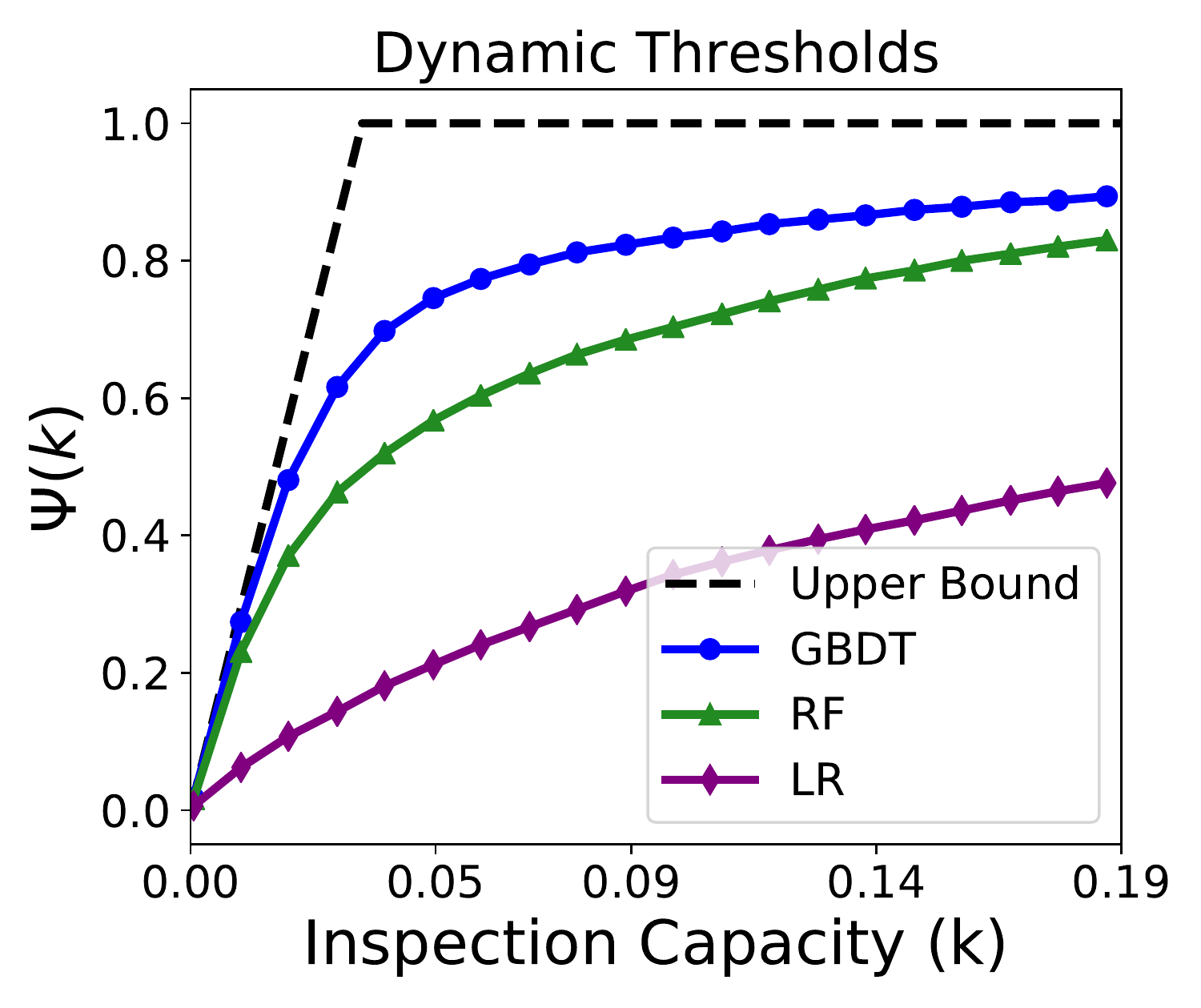} }}%
    \caption{Tradeoff with dynamic thresholds. Impact of class imbalance (a): For very small capacities, the results are very close to the upper bound, and as the capacity increases, class imbalance impacts the detection rate more severely.  Learning phase (b): a classifier with inferior predictive power (low AUC) selects non-fraudulent transactions early-on, and operates further from the upper bound.
    }%
    \vspace{-1.5em}
    \label{fig:effects}%
\end{figure}

  %
 %%%%%%%%%%%%%%%%%%%%%%%%%%%%%%%%%%%%%%%%%%%%%%%%%%%%%%%%%%%%%%%%%%%%%%%%
 \vspace{-0.1cm}
\section{Conclusions}
In this paper, we study the tradeoffs for real-time identification of suspicious events when there are  operational capacity restrictions. By separating the learning phase from the operational decision phase, we characterize the minority-class detection rate directly as a function of the inspection resources and the learned classifier predictions. We formulate the streaming arrival of events as a non-homogeneous Poisson process, and analytically derive this tradeoff for static and adaptive threshold-based  decision making strategies. Our experiments on a public fraud detection dataset show that such formulation could be used for practical applications with limited resources for inspecting streaming data, and that using  adaptive thresholds operates very closely to the upper performance limit resulting from batch processing, especially for very small inspection resources. Future work includes studying the end-to-end tradeoff while also considering the learning phase, and extensions to settings where the misclassification cost of minority-class samples are non-identical.

 %%%%%%%%%%%%%%%%%%%%%%%%%%%%%%%%%%%%%%%%%%%%%%%%%%%%%%%%%%%%%%%%%%%%%%%%
\vspace{0.2cm}
\noindent {{\textbf{Disclaimer}}
This paper was prepared for informational purposes by
the Artificial Intelligence Research group of JPMorgan Chase \& Co. and its affiliates (``JP Morgan''),
and is not a product of the Research Department of JP Morgan.
JP Morgan makes no representation and warranty whatsoever and disclaims all liability,
for the completeness, accuracy or reliability of the information contained herein.
This document is not intended as investment research or investment advice, or a recommendation,
offer or solicitation for the purchase or sale of any security, financial instrument, financial product or service,
or to be used in any way for evaluating the merits of participating in any transaction,
and shall not constitute a solicitation under any jurisdiction or to any person,
if such solicitation under such jurisdiction or to such person would be unlawful.
}

%%%%%%%%%%%%%%%%%%%%%%%%%%%%%%%%%%%%%%%%%%%%%%%%%%%%%%%%%%%%%%%%%%%%%%%%

\bibliographystyle{named}
\bibliography{references}

%%%%%%%%%%%%%%%%%%%%%%%%%%%%%%%%%%%%%%%%%%%%%%%%%%%%%%%%%%%%%%%%%%%%%%%%

%%%%%%%%%%%%%%%%%%%%%%%%%%%%%%%%%%%%%%%%%%%%%%%%%%%%%%%%%%%%%%%%%%%%%%%%
\appendix

\section{Non-Homogeneous Poisson Process}\label{app:nhhp}
This section provides known properties of NHPPs that have been referred to in the main paper, or that have been used in the technical proofs. For more properties and detailed proofs of the properties please see \cite{daley2007introduction,ross2014introduction}.
%For detailed proofs of the following propositions please see \cite{daley2007introduction}.

 \begin{proposition}[Independent splitting (thinning) of a NHPP]\label{prop:splitting}
The independent splitting of a NHPP with intensity  $\lambda(t)$ into $n$ split processes with splitting functions $p_i(t)=p_i, t\geq 0$, produces $n$ independent NHPPs with intensities  $\lambda_i(t) \coloneqq \lambda(t)p_i(t)$.
\end{proposition}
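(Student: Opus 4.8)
The plan is to reduce everything to the elementary interplay between the Poisson and multinomial distributions: I would verify the two defining properties of a NHPP — independent increments, and Poisson-distributed interval counts with the claimed cumulative intensity — for each split process $N_i$, while reading off joint independence of the whole family $\{N_i\}_{i=1}^n$ from the same computation. (If $\sum_{i=1}^n p_i < 1$, first append an auxiliary ``discard'' class with probability $1-\sum_i p_i$ so that the labeling probabilities sum to one; this does not affect the $N_i$, so one may assume $\sum_i p_i=1$.)

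First I would fix an arbitrary finite collection of pairwise-disjoint intervals $I_1,\dots,I_r\subseteq[0,\tau]$ and set $\mu_j\coloneqq\int_{I_j}\lambda(u)\,du$. By the independent-increments property of the NHPP $N(t)$, the counts $N(I_1),\dots,N(I_r)$ are mutually independent with $N(I_j)\sim\mathrm{Poisson}(\mu_j)$. Conditioning on $N(I_j)=m_j$ for every $j$, each of the $m_j$ points falling in $I_j$ is, by definition of independent splitting, assigned to class $i$ independently with probability $p_i$, and assignments in distinct intervals are independent; hence $(N_1(I_j),\dots,N_n(I_j))$ is $\mathrm{Multinomial}(m_j;p_1,\dots,p_n)$, independently over $j$. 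Removing the conditioning and using $\sum_i p_i=1$, the standard Poisson--multinomial identity yields
\begin{align}
\mathbb{P}\!\left(\textstyle\bigcap_{i,j}\{N_i(I_j)=k_{ij}\}\right)
= \prod_{i,j} e^{-p_i\mu_j}\,\frac{(p_i\mu_j)^{k_{ij}}}{k_{ij}!},\notag
\end{align}
which factorizes over all pairs $(i,j)$. Hence $N_i(I_j)\sim\mathrm{Poisson}\big(\int_{I_j}p_i\lambda(u)\,du\big)$; restricting to intervals within a fixed $N_i$ shows each $N_i$ has independent increments and the correct Poisson marginals, so $N_i$ is a NHPP with intensity $\lambda_i(t)=p_i\lambda(t)$, and the global factorization shows $N_1,\dots,N_n$ are jointly independent.

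The argument is essentially bookkeeping, so there is no deep obstacle; the one point that warrants care is that the independence assertion is \emph{joint} — over all classes and all intervals simultaneously, not merely pairwise — which is exactly why I would carry the entire grid $\{I_j\}$ and all labels $i$ through the Poisson--multinomial computation at once rather than handling a single interval or a single pair of processes. A secondary subtlety worth noting is that the clean multinomial step uses that the splitting probabilities are \emph{constant}; for time-varying $p_i(t)$ one would instead condition on $N(\tau)=m$ and invoke the order-statistics property that the unordered arrival times are i.i.d.\ with density $\lambda(t)/\Lambda(\tau)$, but the proposition as stated ($p_i(t)=p_i$) avoids this.
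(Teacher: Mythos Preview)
Your proof is correct and is the standard textbook argument (Poisson--multinomial factorization over a grid of disjoint intervals and all labels simultaneously), but there is nothing to compare it against: the paper does not prove Proposition~\ref{prop:splitting}. It is listed in Appendix~\ref{app:nhhp} as a known property of NHPPs and the reader is referred to \cite{daley2007introduction,ross2014introduction} for detailed proofs. So you have supplied a complete argument where the paper simply cites the literature.
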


\begin{proposition}[Superposition of independent NHPPs]\label{prop:superposition}
The superposition of $n$ independent NHPPs is itself a NHPP with an intensity equal to the sum of the component intensities.
\end{proposition}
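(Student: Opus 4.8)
The plan is to reduce to the case $n = 2$ by a trivial induction on the number of component processes, and then verify the two defining properties of a NHPP — independent increments and Poisson-distributed increments with the claimed mean measure — for the sum of independent NHPPs. Write $N_1,\dots,N_n$ for the component processes with (continuous) intensities $\lambda_1,\dots,\lambda_n$ and cumulative intensities $\Lambda_i(t) = \int_0^t \lambda_i(u)\,du$, and set $N(t) = \sum_{i=1}^n N_i(t)$ and $\Lambda(t) = \sum_{i=1}^n \Lambda_i(t) = \int_0^t \big(\sum_i \lambda_i(u)\big)\,du$. Observe $N(0) = 0$, and $\lambda(t) \coloneqq \sum_i \lambda_i(t)$ is continuous as a finite sum of continuous functions.

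For the increment distribution, fix $0 \le t_1 < t_2 \le \tau$ and write $N(t_2) - N(t_1) = \sum_{i=1}^n \big(N_i(t_2) - N_i(t_1)\big)$. Each summand is $\mathrm{Poisson}(\Lambda_i(t_2) - \Lambda_i(t_1))$, and by the mutual independence of the $N_i$ the summands are independent; hence, by closure of the Poisson family under independent convolution, the sum is $\mathrm{Poisson}\big(\sum_i (\Lambda_i(t_2) - \Lambda_i(t_1))\big) = \mathrm{Poisson}(\Lambda(t_2) - \Lambda(t_1))$. The convolution step is most cleanly checked via probability generating functions, since $\prod_i e^{(\Lambda_i(t_2) - \Lambda_i(t_1))(z-1)} = e^{(\Lambda(t_2) - \Lambda(t_1))(z-1)}$, which is exactly the PGF of a $\mathrm{Poisson}(\Lambda(t_2) - \Lambda(t_1))$ variable.

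For independent increments, take disjoint intervals $(s_1,s_2],(u_1,u_2],\dots$. The key observation is that for each fixed $i$ the random vector $\big(N_i(s_2)-N_i(s_1),\, N_i(u_2)-N_i(u_1),\,\dots\big)$ has independent components (because $N_i$ has independent increments), and these vectors are independent across $i$ (because the $N_i$ are independent processes); therefore the full family $\{N_i(\cdot_2) - N_i(\cdot_1)\}$ indexed by (process, interval) is mutually independent, and the increments of $N$ over the chosen intervals, being sums over disjoint sub-families, are independent. Equivalently, the joint PGF of $N$'s increments over the disjoint intervals factorizes, giving the same conclusion in one line. Combining the two properties shows $N$ is a NHPP with intensity $\lambda(t) = \sum_i \lambda_i(t)$, which completes the induction and the proof.

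\noindent I do not expect a real obstacle here; the only point needing care is the independent-increments step, where one must state explicitly that ``independent increments within each process'' together with ``independence across processes'' yields joint independence of the whole collection of sub-interval increments — a routine fact that the generating-function argument bypasses altogether.
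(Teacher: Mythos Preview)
Your argument is correct and is the standard textbook verification of the defining properties of a NHPP for the superposed process. Note, however, that the paper does not actually prove this proposition: it is stated in Appendix~\ref{app:nhhp} as a known property of NHPPs, with the reader referred to \cite{daley2007introduction,ross2014introduction} for details. So there is no ``paper's own proof'' to compare against; your proposal simply fills in what the paper leaves to the literature, and does so cleanly.
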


 \begin{proposition}\label{prop:prob q}
Let $N_A(t)$ and $N_B(t)$ be two independent NHPPs with respective intensity functions $\lambda_A(t)$ and $\lambda_B(t)$, and let $N(t)= N_A(t)+N_B(t)$. Then,
\begin{itemize}
\item $N(t)$ is a NHPP with intensity function  $\lambda_A(t)+\lambda_B(t)$.
\item Let
$I_i = 
\begin{cases}
1, &\text{if $i^\text{th}$ event is from process A} \\
0, &\text{otherwise}
\end{cases}
$

Random variables $I_i, i=1,2,\dots$ are $i.i.d.$ Bernoulli($q$), with $q = \lambda_A(t)/(\lambda_A(t)+\lambda_B(t))$.  
\end{itemize}
\end{proposition}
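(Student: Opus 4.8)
The plan is to treat the two assertions separately. The first is an immediate specialization of the superposition result already stated, while the second---the i.i.d.\ Bernoulli labeling---is the substantive claim, which I would establish by the standard infinitesimal ``competition'' argument combined with the independent-increments property of a NHPP.

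First I would dispatch the superposition claim by invoking Proposition~\ref{prop:superposition} with $n=2$: since $N_A$ and $N_B$ are independent NHPPs, their superposition $N(t)=N_A(t)+N_B(t)$ is itself a NHPP with intensity $\lambda_A(t)+\lambda_B(t)$, and nothing further is needed. For the labeling, I would condition on the event that the superposed process has exactly one arrival in an infinitesimal window $[t,t+h]$. Using that $A$ contributes an event there with probability $\lambda_A(t)h+o(h)$ (and analogously for $B$), and that two or more arrivals occur only with probability $o(h)$, the conditional probability that the lone arrival is of type $A$ is
\[
\frac{\lambda_A(t)h+o(h)}{\big(\lambda_A(t)+\lambda_B(t)\big)h+o(h)}\xrightarrow[h\to 0]{}\frac{\lambda_A(t)}{\lambda_A(t)+\lambda_B(t)}\eqqcolon q(t),
\]
which identifies the labeling probability of an event occurring at time $t$. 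To upgrade this pointwise statement to the joint claim, I would partition $[0,\tau]$ into disjoint subintervals: by independent increments of the NHPP, the numbers of arrivals and their types across disjoint subintervals are mutually independent, and within each subinterval the type is assigned independently according to $q(\cdot)$. Letting the mesh tend to zero shows that, conditional on the ordered event times $t_1<\dots<t_n$, the indicators $I_1,\dots,I_n$ are independent with $\Pr(I_j=1)=q(t_j)$ and are jointly independent of the arrival times.

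The main obstacle---and the reason the proposition is really a statement about the application at hand---is the \emph{identically distributed} part: $q(t_j)$ depends on $t_j$ unless the ratio $\lambda_A(t)/\big(\lambda_A(t)+\lambda_B(t)\big)$ is constant in $t$. In the paper's splitting one has $\lambda_A(t)=(1-\beta)\lambda(t)$ and $\lambda_B(t)=\beta\lambda(t)$, so this ratio collapses to the constant $1-\beta$; hence $q(t)\equiv q$, the conditioning on the event times drops out, and the $I_i$ become genuinely i.i.d.\ Bernoulli$(q)$ and independent of the arrival times. I would therefore present the result under proportional (constant-ratio) intensities, where the clean conclusion follows directly. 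The only delicate bookkeeping is ensuring the $o(h)$ terms are controlled uniformly in the limit, so that the product-form conditional law of the labels given the arrival times is exact rather than merely approximate.
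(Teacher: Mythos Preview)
The paper does not actually prove this proposition: Appendix~\ref{app:nhhp} lists it as a known property of NHPPs and defers to standard references (Daley--Vere-Jones; Ross) for the details. So there is no in-paper argument to compare against.

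Your sketch is correct and is essentially the textbook route. The first bullet is indeed just Proposition~\ref{prop:superposition} with $n=2$. For the second bullet, your infinitesimal competition argument together with independent increments yields, conditionally on the ordered event times $t_1<\cdots<t_n$, that the labels are independent Bernoulli with parameters $q(t_j)=\lambda_A(t_j)/(\lambda_A(t_j)+\lambda_B(t_j))$; this is exactly how the result is derived in the cited texts. Your observation that the \emph{identically distributed} part fails unless the intensity ratio is constant in $t$ is spot on, and in fact more careful than the proposition as written: the statement only makes literal sense when $q(t)\equiv q$, which is precisely the regime in which the paper applies it (e.g., $\lambda_A(t)=(1-\beta)(1-F_0(\alpha))\lambda(t)$ and $\lambda_B(t)=\beta(1-F_1(\alpha))\lambda(t)$ in Appendix~\ref{app:FT}, so the common factor $\lambda(t)$ cancels). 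Framing the result under the proportional-intensity hypothesis, as you propose, is the right way to state and prove it cleanly.
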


\begin{proposition}\label{prop:conditioning}
Arrival times $t_1,t_2,\dots$ corresponding to a NHPP with intensity $\lambda(t)$, are dependent; however, when conditioned on previous arrivals we have the following. If an event arrives at time $\gamma$, independent of all arrivals prior to $\gamma$, the random wait time to the next event, denoted by $T_\gamma$, is distributed as
\begin{equation}
% \setlength{\abovedisplayshortskip}{3pt}
% \abovedisplayshortskip
    f_{T_\gamma}(t) = \lambda(\gamma+t) \exp\Big(- \int_{0}^{t} \lambda(\gamma+u)du\Big)\label{eq:conditional arrival}
\end{equation}
\end{proposition}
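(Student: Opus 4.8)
The plan is to reduce the statement to the two defining properties of a NHPP recalled earlier: the counts over disjoint intervals are independent, and $N(a,b)$ is Poisson with mean $\Lambda(b)-\Lambda(a)=\int_a^b\lambda(u)\,du$. The ``arrival times are dependent'' claim is immediate and essentially a placeholder — one always has $t_{n}>t_{n-1}$, and in fact the density derived below shows that the law of the gap $t_n-t_{n-1}$ depends on $t_{n-1}$ whenever $\lambda$ is non-constant — so the substantive content is the conditional density of $T_\gamma$, and I would spend the proof there.

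For that, I would first compute the survival function $P(T_\gamma>t)$. Conditioned on an arrival at $\gamma$, the event $\{T_\gamma>t\}$ is exactly the event that there are no arrivals in $(\gamma,\gamma+t]$, i.e. $\{N(\gamma,\gamma+t)=0\}$. Because increments over disjoint intervals are independent, the count $N(\gamma,\gamma+t)$ is independent of everything the process does on $[0,\gamma]$, hence of the conditioning event ``an event arrives at $\gamma$, independent of all arrivals prior to $\gamma$.'' Therefore
\begin{equation}
P(T_\gamma>t)=P\big(N(\gamma,\gamma+t)=0\big)=e^{-(\Lambda(\gamma+t)-\Lambda(\gamma))}=\exp\Big(-\int_0^t\lambda(\gamma+u)\,du\Big),\notag
\end{equation}
the last step being the substitution $v=\gamma+u$ in $\int_\gamma^{\gamma+t}\lambda(v)\,dv$. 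Differentiating, $f_{T_\gamma}(t)=-\frac{d}{dt}P(T_\gamma>t)$; since $\lambda$ is continuous, the fundamental theorem of calculus gives $\frac{d}{dt}\int_0^t\lambda(\gamma+u)\,du=\lambda(\gamma+t)$, and the chain rule yields $f_{T_\gamma}(t)=\lambda(\gamma+t)\exp\big(-\int_0^t\lambda(\gamma+u)\,du\big)$, which is \eqref{eq:conditional arrival}.

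The one delicate point — and the main obstacle — is that ``an event arrives at time $\gamma$'' is a probability-zero event, so the conditioning needs to be interpreted with care. I would make this rigorous by the standard limiting device: condition instead on $\{N(\gamma-h,\gamma)\geq 1\}$; since $(\gamma-h,\gamma)$ and $(\gamma,\gamma+t]$ are disjoint, independence of increments gives $P\big(N(\gamma,\gamma+t)=0 \mid N(\gamma-h,\gamma)\geq 1\big)=P\big(N(\gamma,\gamma+t)=0\big)$ for every $h>0$, and letting $h\downarrow 0$ recovers the displayed identity (this is just the forward recurrence time at the deterministic instant $\gamma$, consistent with Palm theory). Alternatively, one can simply invoke the independent-increments/memoryless structure of the NHPP as recorded in Propositions~\ref{prop:splitting}–\ref{prop:prob q} and in \cite{daley2007introduction,ross2014introduction}; I would state the limiting argument briefly for completeness and then proceed with the differentiation step above.
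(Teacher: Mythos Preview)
The paper does not actually prove this proposition: it is listed in Appendix~\ref{app:nhhp} as a known property of NHPPs, with the remark that detailed proofs can be found in \cite{daley2007introduction,ross2014introduction}. Your argument is correct and is precisely the standard derivation one finds in those references---compute the survival function via $P(T_\gamma>t)=P(N(\gamma,\gamma+t)=0)=\exp(-\int_0^t\lambda(\gamma+u)\,du)$ using independence of increments, then differentiate---so there is nothing to compare.
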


 %%%%%%%%%%%%%%%%%%%%%%%%%%%%%%%%%%%%%%%%%%%%%%%%%%%%%%%%%%%%%%%%%%%%%%%%
\section{Proof of Theorem~\ref{thm:FT}}\label{app:FT}
%\section{Proof of Theorem~2}\label{app:FT}
 As described in Sec.~4, arrival process $N(t)$ can be split into two independent %non-fraudulent and fraudulent 
 processes $N_0(t)$ and $N_1(t)$. Given that the assigned classifier scores  are  independent from one another, per Proposition~\ref{prop:splitting}, we further split each process $N_i(t)$, $i=0,1$, into two \textit{independent} sub-processes $\widehat{N}_i(t)$ and $\widecheck{N}_i(t)$, corresponding to transactions with scores higher and lower than $\alpha$, respectively, such that
%Additionally, since transactions are compromised independently from one another, i.e., $Y_i$ are $i.i.d.$, we further split the transactions in $\widehat{N}(t)$ exceeding $\alpha$ into two \textit{independent} sub-processes as follows:
\begin{itemize}[leftmargin=*]
    \item Process $\widehat{N}_0(t)$ with intensity function $\widehat{\lambda}_{0,\alpha}(t) = (1-\beta) (1-F_0(\alpha))\lambda(t)$ corresponds to \textit{non-fraudulent} transactions with scores \textit{exceeding}  $\alpha$.
    \item Process $\widehat{N}_1(t)$ with intensity function $\widehat{\lambda}_{1,\alpha}(t) = \beta (1-F_1(\alpha))\lambda(t)$ corresponds to \textit{fraudulent} transactions with scores \textit{exceeding}  $\alpha$.
\end{itemize}  
Let $I_i$ denote the event that  transaction $i$ arriving at time $t\geq0$ with a score higher than $\alpha$, belongs to process $\widehat{N}_1(t)$.  Given Proposition~\ref{prop:prob q}, random variable $I_i$ is Bernoulli with parameter
%a transaction arriving at time $t\geq0$ with a score higher than $\alpha$, belongs to process $\widehat{N}_1(t)$ with probability 
 \begin{align}
 q(\alpha) &\coloneqq   \frac{\widehat{\lambda}_{1,\alpha}(t)}{ \widehat{\lambda}_{0,\alpha}(t)+\widehat{\lambda}_{1,\alpha}(t) } %\notag\\
 %& = \frac{\beta(1-F_1(\alpha))\lambda(t)}{\beta(1-F_1(\alpha))\lambda(t)+(1-\beta)(1-F_0(\alpha))\lambda(t)}\notag\\
 %& 
 = \frac{\beta(1-F_1(\alpha)) }{1-F_S(\alpha)} \label{eq:q FT}
 \end{align} 

 Given a classifier with perfect predictive power, with a properly set threshold, the number of inspections $n_k$ is no more than the overall number of transactions that exceed the threshold in interval $[0,\tau]$, which belong to process $\widehat{N}(\tau)\coloneqq \widehat{N}_0(\tau)+\widehat{N}_1(\tau)$ with expected value  $(1-F_{S}(\alpha))\Lambda(\tau)$. For too low of a  threshold value, transactions with scores higher than $\alpha$ arriving after the first $n_k$ ones will not be inspected due to the capacity restriction. To this end, let $\bar{n}_k \coloneqq \min\{n_k, \widehat{N}(\tau)\}$.  As a result of Proposition~\ref{prop:prob q}, since the  sum of $\bar{n}_k$ $i.i.d.$ Bernoulli  random variables has Binomial distribution, the probability that exactly $j$ transactions among the first $\bar{n}_k$ arrivals exceeding $\alpha$ belong to process $\widehat{N}_1(t)$, and are therefore fraudulent, is $\binom{\bar{n}_k}{j} q(\alpha)^j (1-q(\alpha))^{\bar{n}_k-j}$. Random variable $j$ is $\text{Binomial}(\bar{n}_k, q(\alpha))$.

The total number of fraudulent arrivals in $[0,\tau]$ is ${N}_1(\tau)$, and  therefore, $\Psi_{FT}(k)$, is given by

\begin{align}
     &\Psi_{FT}(k) = 
    \mathbb{E}   \bigg[ \frac{ 1 }{{N}_1(\tau)  } \sum\limits_{j=1}^{\bar{n}_k} j\times 
    \mathbb{P}\Big(\parbox{11em}{ \,among the first $n_k$ arrivals,\\  $j$ are fraudulent} \Big) \bigg] \notag\\
    &  \geq  \frac{1 }{ \mathbb{E}  [{N}_1(\tau)]}\mathbb{E}   \bigg[\sum_{j=1}^{\bar{n}_k}   j\, \binom{\bar{n}_k}{j} q(\alpha)^j (1-q(\alpha))^{\bar{n}_k-j}  \bigg] \label{eq:FT div}\\
     &  =   \frac{1 }{ \beta\Lambda(\tau)}  \mathbb{E}   \Big[  \bar{n}_kq(\alpha)\Big] =  \frac{q(\alpha) }{ \beta\Lambda(\tau)}  \mathbb{E}   \Big[  \bar{n}_k\Big]
     \label{eq:bern mean}\\
     &= \frac{q(\alpha) }{ \beta\Lambda(\tau)} \min\Big\{  {n}_k ,\, (1-F_S(\alpha)) \Lambda(\tau)    \Big\}\\
     &=   ( 1-F_1(\alpha) )  \min\Big\{\frac{ n_k } {\Lambda(\tau)(1-F_S(\alpha))}, \, 1   \Big\}\label{eq:FT last} \\
      &\geq (1-F_1(\alpha)) \min\Big\{\frac{ k-1/\Lambda(\tau)}{ 1-F_S(\alpha) }
      , \,1    \Big\}  
    \end{align}
where \eqref{eq:FT div} follows since $\mathbb{E}[x/y]\geq  \mathbb{E}[x]/\mathbb{E}[y] $ for  $\mathbb{E}[x]\geq0$, and \eqref{eq:bern mean} results from the fact that $\sum_{j=1}^{n}j \binom{n}{j} p^j (1-p)^{n-j} = np$ is the expected value of $j\sim \text{Binomial}(n,p)$. Eq.~\eqref{eq:FT last} results from replacing $q(\alpha)$ from \eqref{eq:q FT}.

 %%%%%%%%%%%%%%%%%%%%%%%%%%%%%%%%%%%%%%%%%%%%%%%%%%%%%%%%%%%%%%%%%%%%%%%%
\section{Proof of Theorem~\ref{thm:FT-alpha}}\label{app:FT-alpha} 
%\section{Proof of Theorem~3}\label{app:FT-alpha} 
Given that in the setting of this paper we are interested applications with a large number of expected arrivals in $[0,\tau]$, for simplicity, let us assume that $\Lambda(\tau)\gg 1$, and therefore $1/\Lambda(\tau)\simeq 0$. Then, from  Theorem~2 %Theorem~\ref{thm:FT}
it follows that for a given threshold $\alpha$, the detection rate can be rewritten as follows 
\begin{equation}\label{eq:FT-spread}
    \Psi_{FT}(k) = 
    \begin{cases}
        1-F_1(\alpha), & \text{if  } k \geq 1-F_S(\alpha)   \\
      k\frac{(1-F_1(\alpha))}{1-F_S(\alpha)}, & \text{if  }k < 1-F_S(\alpha)  
    \end{cases} 
\end{equation}
When $k\geq 1-F_S(\alpha)$, then $\Psi_{FT}(k)=1-F_1(\alpha)$ is decreasing  in $\alpha$ since the CDF $F_1(s)$ is increasing in $s$. Therefore, it is maximized for the smallest value $\alpha$ for which the condition $k\geq 1-F_S(\alpha)$ is satisfied, i.e., $\alpha^* = {F}_S^{-1}(1 - k)$. When $k< 1-F_S(\alpha)$, then 
$$\Psi_{FT}(k)= \frac{k}{1-F_S(\alpha)}(1-F_1(\alpha)) <1-F_1(\alpha) ,$$
which is less than the value achieved when $k\geq 1-F_S(\alpha)$. Therefore, the maximum is achieved with $\alpha^*$.

Note that eq.~\eqref{eq:FT-spread} has an intuitive interpretation. Suppose $k \geq 1- F_S(\alpha)$, then the capacity exceeds the expected number of transactions with $G(X) \geq \alpha$, of which $(1-F_1(\alpha))$ are fraudulent on average. Otherwise, when $k < 1- F_S(\alpha)$, only a fraction $k/(1 - F_S(\alpha))$ of the transactions with $G(X) \geq \alpha$ are captured on average, of which $1-F_1(\alpha)$ are fraudulent.

 %%%%%%%%%%%%%%%%%%%%%%%%%%%%%%%%%%%%%%%%%%%%%%%%%%%%%%%%%%%%%%%%%%%%%%%%
%  \section{Proof of Theorem~\ref{thm:DT}}\label{app:DT}
 \section{Proof of Theorem~5}\label{app:DT}
 Per Proposition~\ref{prop:splitting}, we denote the arrival process of non-fraudulent and fraudulent transactions with scores exceeding critical curve $\alpha_j(t)$ by $\widehat{N}_{0,\alpha_j}(t)$ and $\widehat{N}_{1,\alpha_j}(t)$, respectively, with intensities  $\lambda_{0,\alpha_j}(t)=(1-\beta)(1-F_0(\alpha_j(t)))\lambda(t)$ and $\lambda_{1,\alpha_j}(t)=\beta(1-F_1(\alpha_j(t))\lambda(t)$. Based on Proposition~\ref{prop:prob q}, a transaction arriving at $t$ with score $S>\alpha_j(t)$ is fraudulent, i.e., belongs to  $\widehat{N}_{1,\alpha_j}(t)$, with probability %$q_{\alpha_j}(t)$ defined in \eqref{eq:DT q}.
 \begin{align}
 \bar{q}_{\alpha_j}(t)& 
 = \beta \frac{ 1-F_1(\alpha_j(t))}{1-F_S(\alpha_j(t)))}.
 \end{align}

Let $\bar{T}_{\gamma,j}$, $j=1,\dots,n_k$, be the (random) waiting  time starting from $t\geq \gamma$ until the first transaction is selected for inspection when we have $j$ inspections left, i.e., the additional time after $\gamma$ till a transaction score exceeds critical curve $\alpha_{j}(t)$. If the ${(j-1)}^{\text{th}}$, $j=2,\dots,n_k$, inspection happened at time $\gamma$ with respect to critical curve $\alpha_{n_k-j+2}(t)$, then, the $j^\text{th}$ inspection happens at $\gamma+\bar{T}_{\gamma,n_k-j+1}$ with respect to critical curve $\alpha_{n_k-j+1}(t)$. Let us define the index $\rho(j)\coloneqq n_k-j+1$. Based on Proposition~\ref{prop:conditioning}, $\bar{T}_{\gamma,j}$ is distributed according to \eqref{eq:conditional arrival} with intensity function $\lambda_{\alpha_{j}}(t) = (1-F_S(\alpha_{\rho(j)}))\lambda(t)$. 

Let us denote the waiting times between inspections by  ${T}_{1},{T_2},\dots,{T}_{n_k}$. Then, the expected fraction of fraudulent transactions that are selected for inspected is
\begin{align}
 &\Psi_{DT}(k) 
=   \mathbb{E} \bigg[ \frac{1 }{{N}_1(\tau)  } \sum\limits_{j=1}^{n_k}\mathbbm{1}\Big\{j^\text{th} \text{ inspection belongs to }N_{1,\alpha_{\rho(j)}}(t) \Big\}     \bigg] \notag\\
& \stackrel{(a)}{\geq} \frac{1 }{ \mathbb{E}[{N}_1(\tau)  ] } \sum\limits_{j=1}^{n_k}\mathbb{E} \bigg[ \mathbb{P}\Big(\text{inspection at $\sum\limits_{i=1}^{j} {T}_{i}$ belongs to }N_{1,\alpha_{\rho(j)}}(t) \Big)     \bigg] \notag\\
& = \frac{1 }{ \beta\Lambda(\tau)} \sum\limits_{j=1}^{n_k} \mathbb{E}_{{T}_{1}} ... \mathbb{E}_{{T}_{j}}  \bigg[  \bar{q}_{\alpha_{\rho(j)}}(t)\Big(\sum\limits_{i=1}^{j} {T}_{i} \Big)   \bigg] ,\notag
\end{align}
 where (a) follows since $\mathbb{E}[x/y]\geq  \mathbb{E}[x]/\mathbb{E}[y] $ for  $\mathbb{E}[x]\geq0$.

 %%%%%%%%%%%%%%%%%%%%%%%%%%%%%%%%%%%%%%%%%%%%%%%%%%%%%%%%%%%%%%%%%%%%%%%%
\section{Proof of Theorem~\ref{thm:RS}}\label{app:RS}
%\section{Proof of Theorem~1}\label{app:RS}
For a given inspection capacity $k$, any of the (non-fraudulent or fraudulent) transactions are equally likely to be selected for inspection with probability  $\frac{n_k}{N(\tau)}$. Then, the expected number of fraudulent transactions arriving according to process $N_1(t)$ that are selected is
 \begin{align}
&\Psi_{RS}(k) = 
 \mathbb{E}_{N,N_1}  \bigg[  \frac{ \mathbb{E}_{S^{(1)}} \Big[ \sum_{i=1}^{N_1(\tau)}   \mathbbm{1}\{i\in \mathcal{I}\}  \Big]}{ N_1(\tau)} \bigg] \notag\\
 &\;\; \stackrel{(a)}{=} \mathbb{E}_{N,N_1}  \bigg[  \frac{ N_1(\tau)\frac{n_k }{N(\tau)}}{ N_1(\tau)}\bigg]  % \label{eq:indep rand}   \\
 = \mathbb{E}_{N} \Big[ \frac{n_k }{N(\tau)} \Big]\stackrel{(b)}{\geq}  \frac{  n_k }{\mathbb{E}_{N} [ {N(\tau)}]} \notag\\
 & \;\; = \frac{  n_k }{\Lambda(\tau)} %\label{eq:reverse func}\\
 > \frac{( k \Lambda(\tau) -1)}{\Lambda(\tau)}= k -\frac{1}{\Lambda(\tau)}, \notag
\end{align}
where (a) %\eqref{eq:indep rand} 
follows since the  sampling  is independent from the transaction being fraudulent, and (b) %\eqref{eq:reverse func} 
follows from Jensen's inequality since the function $g(x)=\frac{1}{x}$ is convex for $x>0$, and therefore $\mathbb{E}[1/x]\geq1/\mathbb{E}[x]$.

 %%%%%%%%%%%%%%%%%%%%%%%%%%%%%%%%%%%%%%%%%%%%%%%%%%%%%%%%%%%%%%%%%%%%%%%%
\section{Proof of Theorem~\ref{thm:BP}}\label{app:BP}
%\section{Proof of Theorem~6}\label{app:BP}
We derive the  detection rate  using the following definition.  
\begin{definition}\label{def:order stats}
${X}_1,\dots,{X}_n$ are $n$ $i.i.d.$ continuous random variables with a common PDF $f_X(x)$. The ordered realizations of the random variables, ${X}_{(1)},\dots,{X}_{(n)}$, sorted in increasing order, are also random variables. ${X}_{(i)}$ is referred to as the $i^\text{th}$ order statistic.
\end{definition}

 As per definition~\ref{def:order stats}, we denote the sorted scores assigned to non-fraudulent and fraudulent transactions in increasing order, respectively, by ${S}^{(0)}_{(1)}\leq ...\leq{S}^{(0)}_{({N_0(\tau)})}$, and ${S}^{(1)}_{(1)}\leq ...\leq{S}^{(1)}_{({N_1(\tau)})}$. 

 Among the fraudulent transactions, consider  the one with the $j^\text{th}$ largest score, which is equivalent to the  $(N_1(\tau)-j+1)^\text{th}$ order statistic of a sample of size $N_1(\tau)$ with PDF $f_1(S)$. Let us define the index $\rho_1(j)\coloneqq N_1(\tau)-j+1$. Similarly, among the non-fraudulent transactions, consider  the one with the $(n_k-j+1)^\text{th}$ largest score, which is equivalent to the  $(N_0(\tau)-n_k+j)^\text{th}$ order statistic of a sample of size $N_0(\tau)$ with PDF $f_0(S)$. Let us define the index $\rho_0(j)\coloneqq N_0(\tau)-n_k+j$. Then, for any $j\in\{1,\dots,\min\{n_k,N_1(\tau)\}$, the fraudulent transaction with the $j^\text{th}$ largest score is selected for inspection, only if
\begin{align}
 S^{(1)}_{ (\rho_1(j))}> S^{(0)}_{(\rho_0(j))}.
\end{align}
Therefore, the fraction of fraudulent transactions that are inspected, is given by
 \begin{align}
 & \Psi_{BP}(k) =   \mathbb{E}  \Big[\frac{1}{N_1{(\tau)}}   \sum\limits_{j=1}^{\min\{n_k,N_1(\tau)\}} \mathbbm{1} \Big\{ S^{(1)}_{ (\rho_1(j))}> S^{(0)}_{(\rho_0(j))}\Big\}  \Big]      \notag\\
  & \geq  \frac{1 }{ \mathbb{E}[{N}_1(\tau)  ] }  \mathbb{E}_{N_0,N_1}  \bigg[      \sum\limits_{j=1}^{\min\{n_k,N_1(\tau)\}} \mathbb{P} \Big( S^{(1)}_{ (\rho_1(j))}> S^{(0)}_{(\rho_0(j))}\Big)        \bigg] \label{eq:BP div}\\
  & = \frac{1 }{ \beta\Lambda(\tau)}  \mathbb{E}_{ N_0,N_1}  \Big[     \sum\limits_{j=1}^{\min\{n_k,N_1(\tau)\}} F_{W_{j}}  (0)\Big],
\end{align}
where \eqref{eq:BP div} follows since $\mathbb{E}[x/y]\geq  \mathbb{E}[x]/\mathbb{E}[y] $ for  $\mathbb{E}[x]\geq0$, and $ W_{j} \coloneqq   S^{(0)}_{\rho_0(j)} - S^{(1)}_{\rho_1(j)}$, is a random variable with CDF 
\begin{align}
F_{ W_{j}}(w) 
%\int_{0}^1 \int_{0}^{w+s_0} f_{S^{(1)}_{(i)}, S^{(0)}_{(j)} }(s_1, s_0) ds_1 ds_0 \\
   &= \int_{0}^1 \int_{0}^{w+s_1} f_{S^{(0)}_{\rho_0(j)}}(s_0) f_{S^{(1)}_{\rho_1(j)}}( s_1) ds_0 ds_1 \label{eq:indep}
\end{align}
and  the PDF of the order statistic $f_{{S}_{(i)}}(s)$ is given by Lemma \eqref{lemma:orderstat}, provided in the following.

\begin{lemma}\cite[Theorem 5.4.4]{casella2002statistical}\label{lemma:orderstat}
% The $p.d.f.$ of $X_{(i)}$, the $i^{th}$ order statistic of an $i.i.d.$ sample of random variable $X$ with absolutely continuous $p.d.f$ and CDF, $f_X(x)$ and $F_X(x)$, respectively, is 
Let $X_{(i)}$ be the $i^{th}$ order statistics of an $i.i.d.$ random sample of size $n$ from a continuous distribution with CDF $F_X(x)$ and PDF $f_X(x)$. Then, the PDF of $X_{(i)}$ is  
\begin{align}
    f_{X_{(i)}}(x) = i\binom{n}{i} \Big[ F_X(x)\Big]^{i-1} \Big[ 1- F_X(x)\Big]^{n-i} f_{X}(x). \label{eq:orderstat}
\end{align}
\end{lemma}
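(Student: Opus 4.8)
The plan is to prove the order-statistic density formula directly, since it is a self-contained probabilistic fact that does not depend on any of the earlier machinery in the paper. I would carry out two equivalent arguments and present the cleaner one; both reduce the problem to counting how many of the $n$ i.i.d.\ samples fall into specified regions.

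First I would establish the CDF of $X_{(i)}$. The event $\{X_{(i)}\le x\}$ holds exactly when at least $i$ of the $n$ samples are $\le x$. Because the samples are i.i.d.\ with $\mathbb{P}(X_\ell\le x)=F_X(x)$, the count of samples $\le x$ is $\mathrm{Binomial}(n,F_X(x))$, so
\begin{align}
F_{X_{(i)}}(x)=\sum_{m=i}^{n}\binom{n}{m}F_X(x)^m\bigl(1-F_X(x)\bigr)^{n-m}. \notag
\end{align}
Differentiating in $x$ (using that $F_X$ is differentiable with $F_X'=f_X$ under the continuity hypothesis) and applying the product rule term by term produces a telescoping sum in which all contributions cancel except the $m=i$ boundary term, yielding the claimed density $f_{X_{(i)}}$.

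As a cleaner alternative I would use the direct multinomial argument. For $X_{(i)}$ to lie near $x$, exactly one of the $n$ samples must fall in the infinitesimal window $(x,x+dx]$, exactly $i-1$ of the remaining $n-1$ must fall below $x$, and the other $n-i$ must fall above. Counting the assignments gives $n\binom{n-1}{i-1}$ choices, each of probability $F_X(x)^{i-1}\bigl(1-F_X(x)\bigr)^{n-i}f_X(x)\,dx$ to leading order; invoking the identity $n\binom{n-1}{i-1}=i\binom{n}{i}$ then recovers \eqref{eq:orderstat}. To make this rigorous rather than heuristic, I would phrase it as the limit of $\bigl(F_{X_{(i)}}(x+h)-F_{X_{(i)}}(x)\bigr)/h$ as $h\to 0$, partitioning the event according to which sample realizes the order statistic and bounding the lower-order multi-occupancy terms.

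The main obstacle is the bookkeeping in the differentiation step: verifying that the derivative of the binomial tail telescopes correctly requires care with the index shift in $\binom{n}{m}$ and with the two terms arising from the product rule. This is routine but error-prone, so I expect the direct multinomial limit to be the safer route to present in full.
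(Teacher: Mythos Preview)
Your proposal is correct: both the CDF-differentiation route and the multinomial (infinitesimal window) argument are standard and valid derivations of the order-statistic density, and your remark that $n\binom{n-1}{i-1}=i\binom{n}{i}$ closes the latter cleanly. Note, however, that the paper does not supply its own proof of this lemma---it is quoted verbatim as \cite[Theorem~5.4.4]{casella2002statistical} and used as a black box in the proof of Theorem~\ref{thm:BP}---so there is no in-paper argument to compare against; your write-up would simply be filling in a textbook result the authors chose to cite rather than reprove.
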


%%%%%%%%%%%%%%%%%%%%%%%%%%%%%%%%%%%%%%%%%%%%%%%%%%%%%%%%%%%%%%%%%%%%%%%%
%  \section{Proof of Theorem~\ref{thm:RS}}\label{app:RS}
 \section{End-to-End Upper Bound}\label{app:UB}
In this paper, we consider approaches that identify suspicious events in two steps: (1) learning a predictive classifier that  discovers frauds based on the transaction features $X$, and (2) threshold-based sampling based on the classifier scores. We have mainly focused on the second step, and compared sampling methods for a given classifier. By considering the end-to-end system that takes both steps into account simultaneously,  %takes into account both the learning phase of a predictive classifier, and the decision-making phase of selecting samples for inspection, 
we can define the fraud detection rate as a function of the inspection capacity, denoted by $\mathbf{\Psi}(k)$, as follows:
 \begin{align}\label{eq:opt psi}
   \mathbf{\Psi}(k) \coloneqq \max_{G\in \mathcal{G}} \max_{\mathcal{I}(k)} \Psi(k)
 \end{align}
The following theorem provides an upper bound on $\mathbf{\Psi}(k)$, which is also an upper bound on ${\Psi}(k)$ defined in eq. (1) of the main paper. Note that this bound is not necessarily achievable.

\begin{theorem}[\bf{End-to-End Upper Bound}]\label{thm:up}
For a given inspection capacity $k$, the end-to-end tradeoff satisfies
\begin{align}
\mathbf{\Psi}(k)  \leq  \min\Big\{ k/\beta,\,1\Big\}.
\end{align}
\end{theorem}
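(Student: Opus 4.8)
The plan is to bound $\mathbf{\Psi}(k)$ directly from its definition in \eqref{eq:opt psi} by a counting argument that is valid for \emph{any} classifier $G$ and \emph{any} selection rule $\mathcal{I}(k)$, and therefore holds for the maximizers. Recall that $\mathcal{I}(k)$ selects at most $n_k = \lfloor k\Lambda(\tau)\rfloor$ transactions, and that $\Psi(k) = \mathbb{E}_{Y,N}\big[\mathbb{E}_X[\sum_{i\in\mathcal{I}(k)} Y_i]\,/\,\sum_{i=1}^{N(\tau)} Y_i\big]$. First I would observe the deterministic pointwise fact that, conditioned on a realization of the arrival process and labels, the numerator $\sum_{i\in\mathcal{I}(k)} Y_i$ is simultaneously bounded by two quantities: it is at most $|\mathcal{I}(k)| \le n_k$ (you cannot detect more frauds than the total number of transactions you inspect), and it is at most $N_1(\tau) = \sum_{i=1}^{N(\tau)} Y_i$ (you cannot detect more frauds than actually arrived). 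Hence the ratio inside the expectation is bounded by $\min\{n_k/N_1(\tau),\,1\}$.

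Next I would take expectations. The bound $\Psi(k) \le \mathbb{E}_{N}[\min\{n_k/N_1(\tau),\,1\}]$ holds for every $G$ and every $\mathcal{I}(k)$, so it passes through the double maximum to give $\mathbf{\Psi}(k) \le \mathbb{E}_{N}[\min\{n_k/N_1(\tau),\,1\}]$. It remains to show the right-hand side is at most $\min\{k/\beta,\,1\}$. For the trivial branch, $\min\{n_k/N_1(\tau),1\}\le 1$ so the expectation is $\le 1$. For the other branch I would use concavity: $x \mapsto \min\{n_k/x,\,1\}$ is convex on $x>0$ (it is the pointwise minimum of a convex decreasing function and a constant — actually this requires a little care, see below), so I would instead bound $\min\{n_k/N_1(\tau),1\} \le n_k/N_1(\tau)$ pointwise, take expectations, and apply Jensen to the convex map $x\mapsto 1/x$ to get $\mathbb{E}_N[n_k/N_1(\tau)] \ge n_k/\mathbb{E}[N_1(\tau)]$ — but this is the wrong direction. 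The cleaner route is: since $N_1(\tau)\ge 1$ on the event that any fraud arrives (and the ratio is interpreted as appropriate when $N_1(\tau)=0$), and more robustly, simply note $n_k/N_1(\tau) \le n_k$ always, which is too weak. The right move is to handle it at the level of the raw count: $\mathbb{E}[\sum_{i\in\mathcal{I}(k)} Y_i] \le \mathbb{E}[N_1(\tau)] = \beta\Lambda(\tau)$ and $\le n_k \le k\Lambda(\tau)$, but these bound the numerator, not the normalized ratio.

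The honest argument, which I would present, is: conditioned on $N_1(\tau) = m \ge 1$, the inner ratio is at most $\min\{n_k/m,1\}$, and when $m=0$ the detection rate is conventionally $1$ (or the term is excluded). Write $\mathbf\Psi(k) \le \mathbb{E}[\min\{n_k/N_1(\tau),1\}\mid N_1(\tau)\ge 1]\,\mathbb{P}(N_1\ge 1) + \mathbb{P}(N_1 = 0)$; for large $\Lambda(\tau)$ the second term is negligible and the first is $\le \min\{n_k/\mathbb{E}[N_1(\tau)],1\}$ \emph{only if} $\min\{n_k/m,1\}$ is concave in $m$, which it is (the minimum of the concave function $n_k/m$ and the constant $1$ is concave for $m>0$), so Jensen gives $\mathbb{E}[\min\{n_k/N_1,1\}] \le \min\{n_k/\mathbb{E}[N_1],1\} = \min\{n_k/(\beta\Lambda(\tau)),\,1\} \le \min\{k/\beta,\,1\}$, using $n_k \le k\Lambda(\tau)$. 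The main obstacle is getting the direction of Jensen right and cleanly handling the $N_1(\tau)=0$ event; I expect the paper's proof either assumes $\Lambda(\tau)\gg 1$ so that $\mathbb{P}(N_1=0)$ is dropped, or absorbs it into the statement, and I would do likewise, flagging that $\min\{\cdot,1\}$ is concave (not convex) in the relevant variable so Jensen applies in the favorable direction.
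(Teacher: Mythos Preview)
Your pointwise step---bounding the inner ratio by $\min\{n_k/N_1(\tau),1\}$ for every classifier and every selection rule---is correct and is exactly how the paper starts. The gap is in your Jensen step: you claim that $m\mapsto \min\{n_k/m,1\}$ is concave because ``$n_k/m$ is concave,'' but $n_k/m$ is \emph{convex} on $(0,\infty)$ (second derivative $2n_k/m^{3}>0$). Consequently $m\mapsto\min\{n_k/m,1\}$ is neither convex nor concave: it is constant on $(0,n_k]$, strictly convex on $[n_k,\infty)$, and has a downward kink at $m=n_k$. Jensen therefore does not deliver $\mathbb{E}\big[\min\{n_k/N_1,1\}\big]\le \min\{n_k/\mathbb{E}[N_1],1\}$; indeed this inequality can fail (e.g.\ $N_1\sim\mathrm{Poisson}(2)$, $n_k=1$ gives $\mathbb{E}[\min\{1/N_1,1\}]\approx 0.63>0.5=n_k/\mathbb{E}[N_1]$, with the convention that the ratio is $1$ when $N_1=0$). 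So the ``favorable direction'' you flag is not actually available.

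It is worth comparing with the paper's own argument, which runs into the same obstruction from the opposite side. The paper invokes the convexity of $x\mapsto 1/x$ and writes the chain with $\ge$ rather than $\le$, i.e.\ it effectively argues $\mathbf{\Psi}(k)=\mathbb{E}[\min\{n_k/N_1,1\}]\ge \min\{n_k/(\beta\Lambda(\tau)),1\}$, a \emph{lower} bound, not the stated upper bound; its final step $\min\{(k\Lambda(\tau)-1)/(\beta\Lambda(\tau)),1\}\ge \min\{k/\beta,1\}$ also goes the wrong way since $(k\Lambda(\tau)-1)/(\beta\Lambda(\tau))<k/\beta$. In short, the paper is really identifying $\min\{k/\beta,1\}$ as the detection rate of an oracle classifier and using it as a benchmark, not proving it as a rigorous ceiling. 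The clean way to recover the intended statement is to treat it in the large-$\Lambda(\tau)$ regime the paper assumes elsewhere: by Poisson concentration, $N_1(\tau)\ge (1-\epsilon)\beta\Lambda(\tau)$ with probability $1-O\big(1/(\epsilon^{2}\beta\Lambda(\tau))\big)$, whence
\[
\mathbb{E}\big[\min\{n_k/N_1(\tau),1\}\big]\ \le\ \min\Big\{\tfrac{n_k}{(1-\epsilon)\beta\Lambda(\tau)},\,1\Big\}+O\Big(\tfrac{1}{\epsilon^{2}\beta\Lambda(\tau)}\Big)\ \le\ \min\{k/\beta,1\}+o(1),
\]
which is the honest version of the bound.
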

\begin{proof}
The ideal classifier would be able to, given an inspection budget of $n_k$ transaction, perfectly select (the first) $n_k$ fraudulent transactions as the arrive. With enough capacity, i.e., when  $n_k \geq N_1(\tau)$, the inspector will  detect all frauds without error. Therefore, 
\begin{align}
 &\mathbf{\Psi}(k)  = \mathbb{E}  \Big[ \min\Big\{ \frac{n_k}{N_1(\tau)},\, 1 \Big\}\Big]    \geq      \min\Big\{ \frac{n_k}{\mathbb{E}[N_1(\tau)]},\, 1 \Big\}\notag \\
 &=    \min\Big\{  \frac{  n_k }{\beta \Lambda(\tau)}, 1\Big\} > \min\Big\{  \frac{( k \Lambda(\tau) -1)}{\beta\Lambda(\tau)}, 1\Big\} %\notag\\
%& 
\geq \min\Big\{ \frac{k}{\beta},\,1\Big\}  \notag
\end{align}
which follows   from Jensen's inequality since  function $g(x)=\frac{1}{x}$ is convex for $x>0$, and therefore $\mathbb{E}[1/x]\geq1/\mathbb{E}[x]$.
%due to Jensen's inequality since  $\mathbb{E}[1/x]\geq1/\mathbb{E}[x]$ for $x>0$.

\end{proof}

%%%%%%%%%%%%%%%%%%%%%%%%%%%%%%%%%%%%%%%%%%%%%%%%%%%%%%%%%%%%%%%%%%%%%%%%
 \section{Experiment Setup Details}\label{app:experiments}
 
\paragraph{Data Prepossessing:}
The IEEE-fraud dataset contains more than one million online  transactions, and each transaction contains more than $400$ features. While the original dataset contains a train set and test set, only the train set contains fraud labels, and therefore, we only use the train set for our analysis which has $590$ K  samples. Each transaction has a timestamp feature (in seconds) that is provided with respect to a reference value. We assume that each time interval $[0,\tau]$ corresponds to a full day of $\tau=86400$ seconds, resulting in $183$ episodes for our experiments.  We follow the analysis provided in \cite{ieee-processing} for preprocessing and removing the redundant features through correlation analysis, and we further engineer aggregate features that increase the local validation AUC.

As described in Sec.~5 of the paper, the original dataset contains around $3.5\%$ frauds. We create two additional datasets by modifying the number of fraudulent transactions by up-sampling (\cite{chawla2002smote}) and down-sampling (uniformly at random) the minority class. The average number of daily transactions $\Lambda(\tau)$, and average number of daily frauds $\Lambda_1(\tau)$ are provided in Table.~\ref{tab:data}

 \begin{table}\setlength{\tabcolsep}{3pt}
% \captionsetup[table]{skip=-1em}
\centering
\resizebox{ \linewidth}{!}{%
\begin{tabular}{lccc}
\toprule
 &  Class   & Average Number of   &  Average Number of        \\
  &  Imbalance   &            Daily Transactions  &  Daily Frauds       \\
  &  $\beta$ &  $\Lambda(\tau)$  &   $\Lambda_1(\tau)$    \\
 Dataset &    &     &       \\
\midrule
IEEE-fraud-$8\%$ & $0.08$ & $3378 \pm 858$   &  $271 \pm 68$  \\
\midrule
 IEEE-fraud-$3.5\%$  &  $0.035$  & $3219 \pm 846$ &$112 \pm 28$  \\
\midrule
  IEEE-fraud-$1.5\%$ & $0.015$ & $3154 \pm 843$ & $48 \pm 13 $  \\
 \bottomrule
\end{tabular}
}
\caption{Properties of IEEE-fraud dataset.}
\label{tab:data}
\end{table}

\vspace{0.1cm}
\noindent{\textbf{Training Setup}}
We split the dataset into three portions of by randomly selecting the days, such that there are $93$ days for training the classifier, $45$ days for estimating $\lambda(t)$, $F_0(t)$ and $F_1(t)$, and there are $45$ days for empirical experiments. The parameters for training each classifier based on the area under the ROC curve (AUC) are as followed:
\begin{itemize}
\item \textbf{Gradient Boosted Decision Trees (GBDT):} We use the \textbf{XGBoost} library to train our classifier, using $25\%$ of the data for cross-validation, with the following parameters.
\begin{lstlisting}[language=python]
{   'n_estimators': 2000, 
    'max_depth': 12,
    'learning_rate' : 0.02,
    'subsample' : 0.8,
    'colsample_bytree': 0.4,
    'missing': -1,   
    'eval_metric': 'auc }
\end{lstlisting}

\item \textbf{Random Forests (RF):}
We use the \textbf{scikit-learn} library to train our classifier, by using grid search cross-validation over the following hyperparameters in $50$ iterations, and use the set of parameters for our final training.
\begin{lstlisting}[language=python]
{   'bootstrap': True, 
    'max_depth': [5, 12],
    'max_features' : [2, 3],
    'min_samples_leaf': [3, 4, 5],
    min_samples_split': [8, 10, 12],
    'n_estimators': [100, 200, 300, 1000]}
\end{lstlisting}

\item \textbf{Logistic regression (LR):}
We use the \textbf{scikit-learn} library to train our classifier, using by using grid search cross-validation in $50$ iterations over 
\begin{lstlisting}[language=python]
{  'C': np.power(10.0, np.arange(-3, 3)) }
\end{lstlisting}

\end{itemize}

 \begin{figure*} [!ht]
    \captionsetup[subfigure]{aboveskip=-1pt,belowskip=-2em}
    \centering
    \subfloat[\centering ]{{\includegraphics[width=0.23\textwidth]{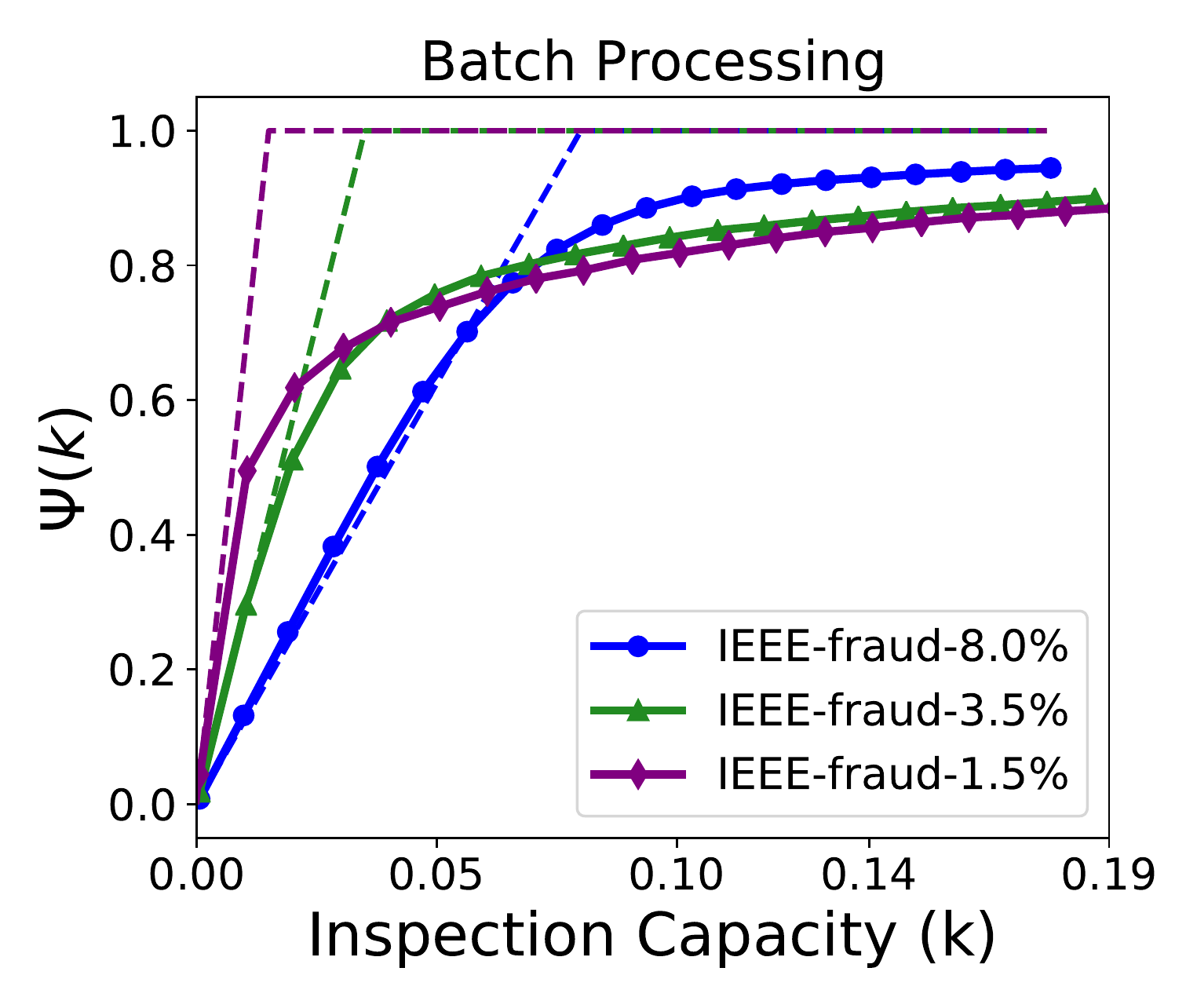} }}%
     %\hspace{-0.9em}
    \subfloat[\centering ]{{\includegraphics[width=0.23\textwidth]{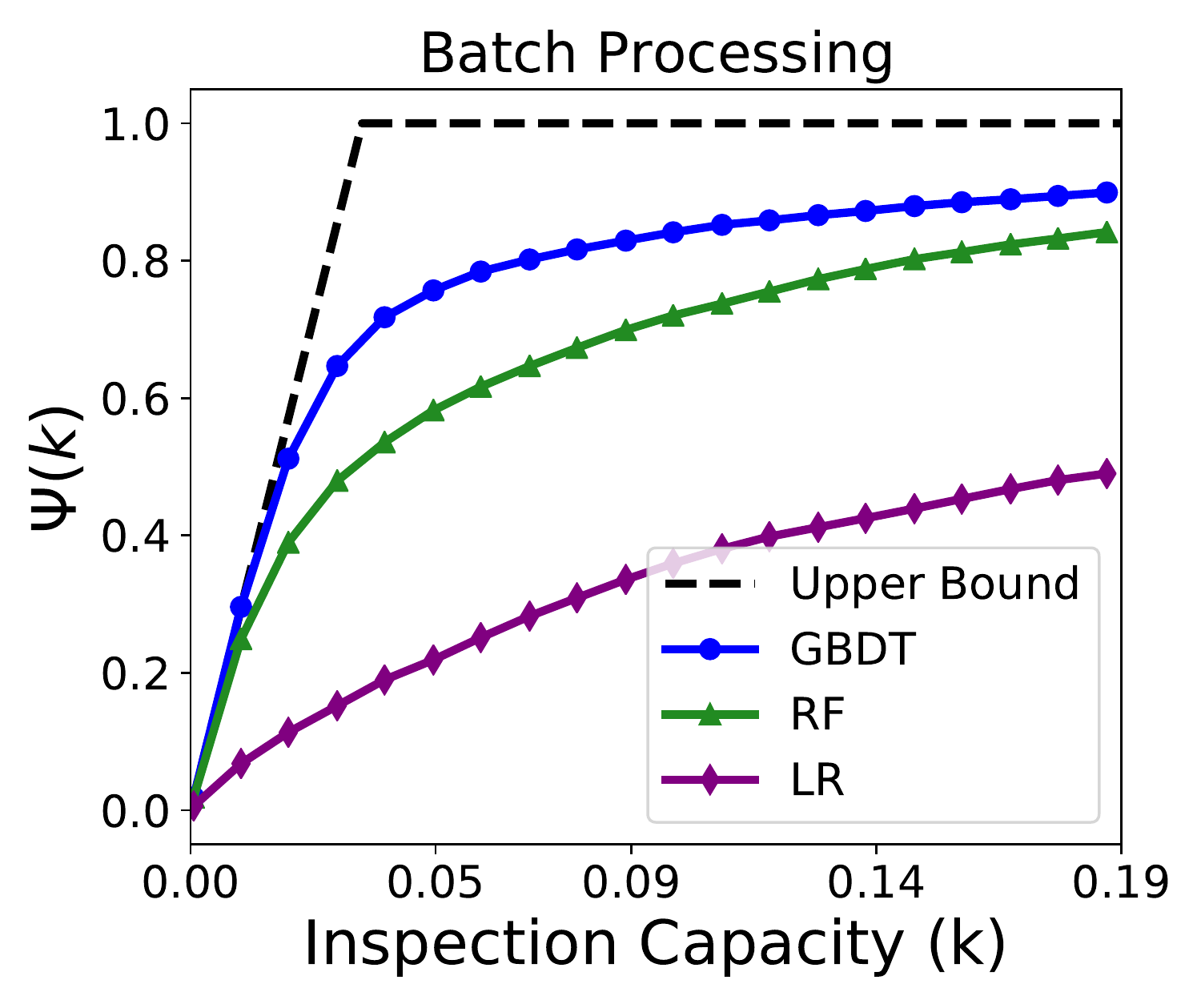} }}%
    % \hspace{-0.9em}
    \subfloat[\centering ]{{\includegraphics[width=0.23\textwidth]{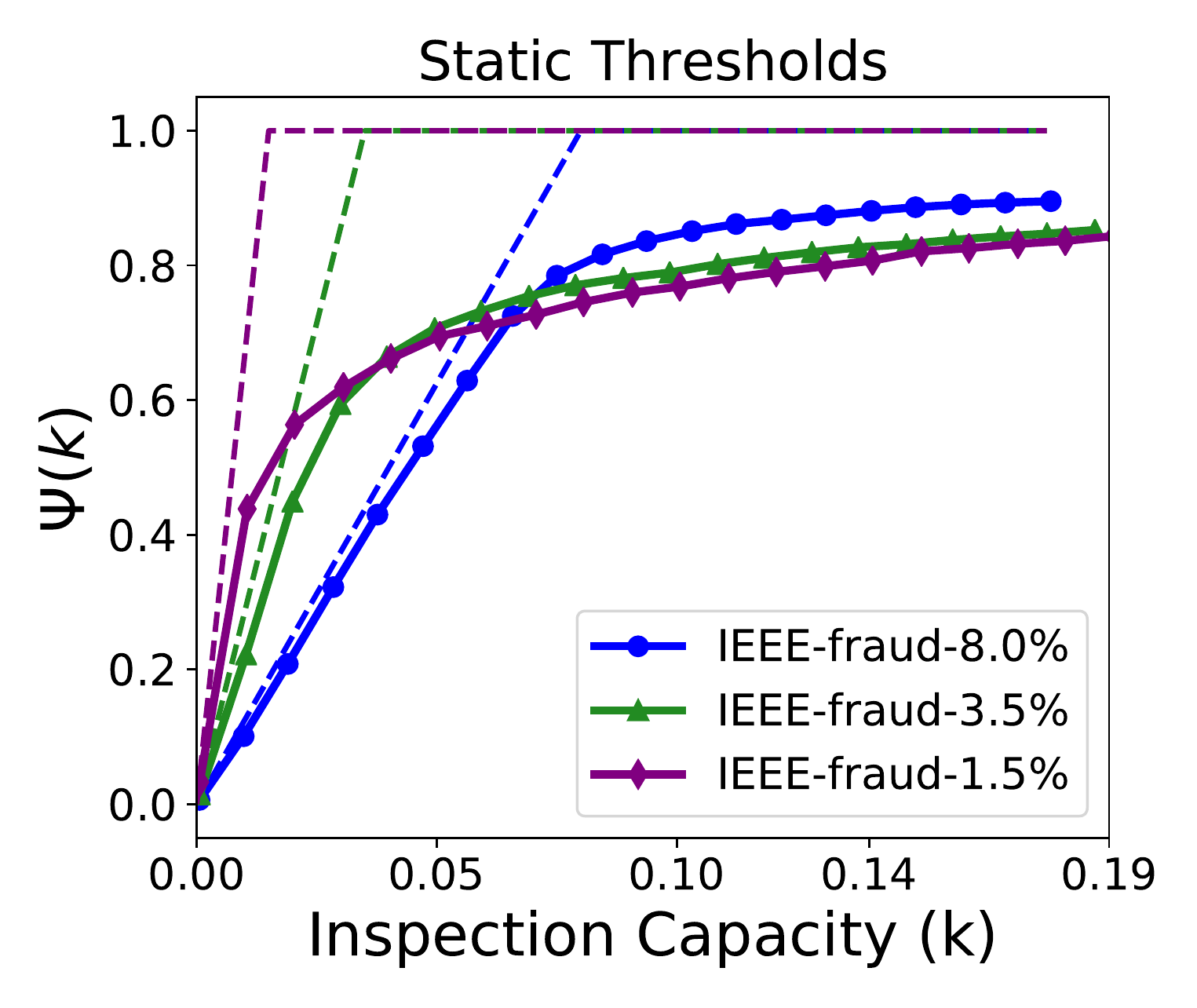} }}%
    % \hspace{-0.9em}
    \subfloat[\centering ]{{\includegraphics[width=0.23\textwidth]{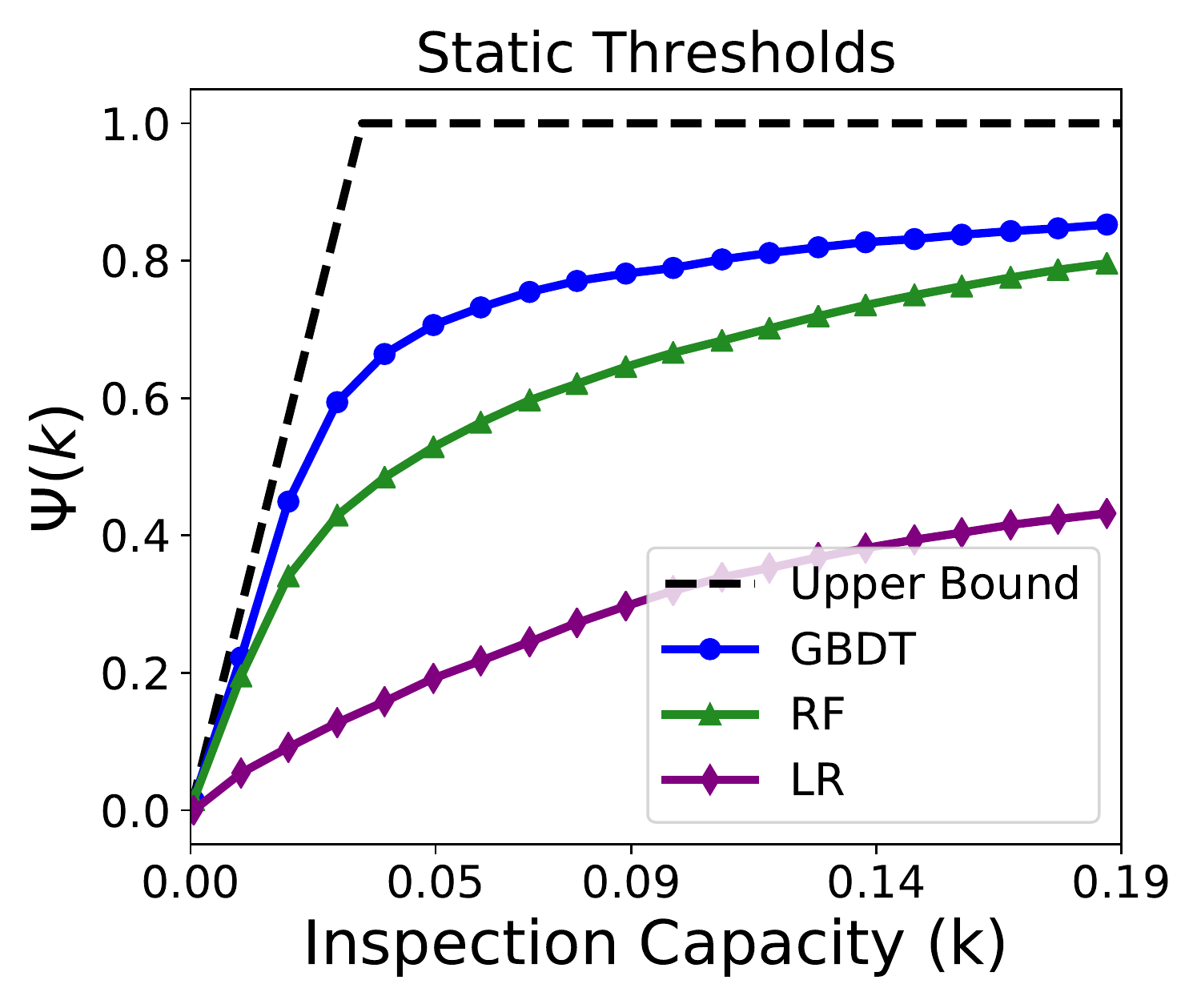} }}%
     \caption{Tradeoff with dynamic thresholds for batch processing (a, b), and static thresholds (c, d). 
    }%
    \label{fig:additional}%
\end{figure*}
\vspace{0.1cm}
\noindent{\textbf{Additional Experiment Results}}
In this section, we provide results similar to Fig.~4 corresponding to the Batch Processing (Fig.~\ref{fig:additional} (a), (b)) and Static Thresholds methods (Fig.~\ref{fig:additional} (c), (d)). Both methods exhibit similar trends as the method with dynamic thresholds. Specifically, for the impact of class imbalance on the left, it is observed that For very small capacities, the results are very close to the upper bound, and as the capacity increases, class imbalance impacts the detection rate more severely.  And, the classifier learning phase impacts the tradeoff such that a classifier with inferior predictive power (low AUC) selects non-fraudulent transactions early-on, and operates further from the upper bound.

\end{document}